\documentclass[letterpaper, 10pt, conference]{ieeeconf}      

\IEEEoverridecommandlockouts                              

\overrideIEEEmargins                                      


\usepackage{fancyhdr}


\usepackage{xcolor}
\usepackage{comment}

\usepackage{cite}

\usepackage[mathcal]{euscript}

\usepackage[cmex10]{amsmath}
\usepackage{amssymb}

\usepackage{amsthm} 
  
\usepackage{dsfont} 

\usepackage{epsfig} 

\usepackage{multirow}
\usepackage{enumerate}

\usepackage[ruled, vlined, linesnumbered]{algorithm2e}


 
\newtheoremstyle{plain}
	  {}
	  {}
	  {\itshape}
	  {}
	  {\bfseries}
	  {}
	  {5pt plus 1pt minus 1pt}
	  {}

\newtheoremstyle{definition}
  	  {}
	  {}
	  {\normalfont}
	  {}
	  {\bfseries}
	  {}
	  {5pt plus 1pt minus 1pt}
	  {}
  
\theoremstyle{plain}

\newtheorem{lemma}{Lemma}
\newtheorem{proposition}{Proposition}

\theoremstyle{definition}


\newcommand{\refeq}[1]			{(\ref{#1})} 
\newcommand{\reffig}[1]			{Fig. \ref{#1}} 
\newcommand{\refsec}[1]			{Section \ref{#1}}
\newcommand{\refapp}[1]			{Appendix \ref{#1}}

\newcommand{\refprop}[1]		{Proposition \ref{#1}}

\newcommand{\reflem}[1]			{Lemma \ref{#1}}

\newcommand{\reffn}[1] 		    {\textsuperscript{\ref{#1}}}



\theoremstyle{plain}


\newcommand{\R}  	{\mathbb{R}} 
\newcommand{\dimspace} 	{d}
\newcommand{\radius} 	{\rho}
\newcommand{\ctr} 	{\vect{c}}
\newcommand{\conv}      {\mathrm{conv}} 
\newcommand{\tri}       {\mathrm{T}} 

\newcommand{\ovect}[1] {\begin{bmatrix}\cos #1\\ \sin #1 \end{bmatrix}}
\newcommand{\ovectsmall}[1] {\scalebox{0.85}{$\ovect{#1}$}}
\newcommand{\ovecT}[1] {\tr{\ovect{#1}\!}}
\newcommand{\ovecTsmall}[1] {\scalebox{0.85}{$\ovecT{#1}$}}
\newcommand{\nvect}[1] {\begin{bmatrix}-\sin #1\\ \cos #1 \end{bmatrix}}

\newcommand{\nvecT}[1] {\tr{\nvect{#1}\!}}
\newcommand{\nvecTsmall}[1] {\scalebox{0.85}{$\nvecT{#1}$}}

\newcommand{\xvectsmall}[1] { \tfrac{ \goal - #1 }{ \norm{ \goal - #1 } } }

\newcommand{\hdist}      {\varepsilon}
\newcommand{\hpos}       {\vect{x}_\varepsilon}
\newcommand{\hvel}       {\dot{\vect{x}}_\varepsilon}
\newcommand{\hproj}      {\widecheck{\pos}}
\newcommand{\hext}       {\widehat{\pos}}

\newcommand{\thead}		 {\vect{t}_{\varepsilon}} 
\newcommand{\nhead}      {\vect{n}_{\varepsilon}} 
\newcommand{\rotmat}     {\mat{R}} 
\newcommand{\hhat}       {\widehat{\pos}_\varepsilon}

\newcommand{\motionset}{\mathcal{M}} 

\newcommand{\freespace}	{\mathcal{F}} 
\newcommand{\workspace}	{\mathcal{W}} 
\newcommand{\obstspace}	{\mathcal{O}} 
\newcommand{\ball}      {\mathrm{B}} 



\newcommand{\path}      {\mathrm{p}}
\newcommand{\pathparam} {s}
\newcommand{\smin}      {\pathparam_{\min}}
\newcommand{\smax}      {\pathparam_{\max}}
\newcommand{\startpos}  {\pos_{\mathrm{start}}}
\newcommand{\goalpos}   {\pos_{\mathrm{goal}}}


\newcommand{\pos} 		{\vect{x}} 			
\newcommand{\ort}	    {\theta}			

\newcommand{\linvel}     {v}  
\newcommand{\angvel}     {\omega} 



\newcommand{\focgain}   	{\kappa_{r}} 			
\newcommand{\hgain}   	{\kappa_{\hdist}} 			
\newcommand{\rgain}      {\kappa_{r}} 

\newcommand{\goal}		{\vect{x}^*} 

\newcommand{\gain}   	{\kappa} 			
\newcommand{\ctrl}      {\vect{u}} 			




\newcommand{\safedist}  {\mathrm{dist}}
\newcommand{\safelevel}{\sigma} 









\let\originalleft\left
\let\originalright\right
\renewcommand{\left}{\mathopen{}\mathclose\bgroup\originalleft}
\renewcommand{\right}{\aftergroup\egroup\originalright}
	
\newcommand{\plist}[1] 	{\left(#1\right)} 
\newcommand{\blist}[1]	{\left[ #1 \right]} 
\newcommand{\clist}[1]	{\left\{#1\right\}} 

\newcommand{\vect}[1]   {\mathrm{#1}}
\newcommand{\mat}[1]    {\mathbf{#1}}
\newcommand{\tr}[1] {{#1}^{\mathrm{T}}} 
\newcommand{\norm}[1]  {\|#1\|}
\newcommand{\absval}[1]{\left|#1 \right|} 

\newcommand{\ldf}   {:=} 

\newcommand{\diff} {\mathrm{d}}

\makeatletter
\DeclareRobustCommand\widecheck[1]{{\mathpalette\@widecheck{#1}}}
\def\@widecheck#1#2{%
    \setbox\z@\hbox{\m@th$#1#2$}%
    \setbox\tw@\hbox{\m@th$#1%
       \widehat{%
          \vrule\@width\z@\@height\ht\z@
          \vrule\@height\z@\@width\wd\z@}$}%
    \dp\tw@-\ht\z@
    \@tempdima\ht\z@ \advance\@tempdima2\ht\tw@ \divide\@tempdima\thr@@
    \setbox\tw@\hbox{%
       \raise\@tempdima\hbox{\scalebox{1}[-1]{\lower\@tempdima\box
\tw@}}}%
    {\ooalign{\box\tw@ \cr \box\z@}}}
\makeatother

\title{\LARGE \bf
Adaptive Headway Motion Control and Motion Prediction \\ for Safe Unicycle Motion Design%
}

\author{Aykut \.{I}\c{s}leyen and Nathan van de Wouw and \"{O}m\"{u}r Arslan
\thanks{The authors are with the Department of Mechanical Engineering, Eindhoven University of Technology, P.O. Box 513, 5600 MB Eindhoven, The Netherlands. The authors are also affiliated with the Eindhoven AI Systems Institute. Emails:  \{a.isleyen, n.v.d.wouw, o.arslan\}@tue.nl}%
}

\begin{document}

\maketitle
\thispagestyle{empty}
\pagestyle{empty}

\begin{abstract}
Differential drive robots that can be modeled as a kinematic unicycle are a standard mobile base platform for many service and logistics robots. 
Safe and smooth autonomous motion around obstacles is a crucial skill for unicycle robots to perform diverse tasks in complex environments.
A classical control approach for unicycle control is feedback linearization using a headway point at a fixed headway distance in front of the unicycle.
The unicycle headway control brings the headway point to a desired goal location by embedding a linear headway reference dynamics, which often results in an undesired offset for the actual unicycle position.
In this paper, we introduce a new unicycle headway control approach with an adaptive headway distance that overcomes this limitation, i.e., when the headway point reaches the goal the unicycle position is also at the goal.
By systematically analyzing the closed-loop unicycle motion under the adaptive headway controller, we design analytical feedback motion prediction methods that bound the closed-loop unicycle position trajectory and so can be effectively used for safety assessment and safe unicycle motion design around obstacles.
We present an application of adaptive headway motion control and motion prediction for safe unicycle path following around obstacles in numerical simulations. 
\end{abstract}

\section{Introduction}
\label{sec.Introduction}

Autonomous mobile robots offer flexible automation solutions for many real-life challenges, from assisting people with daily activities (e.g., service robots \cite{kim_etal_RAM2009}) to enhancing transportation systems (e.g., warehouse robots \cite{renan_nascimento_RAS2021}).
A standard choice of a mobile robot base for many such indoor application settings is differential drive robots that can be modeled as a kinematic unicycle \cite{pentzer_brennan_reichard_IROS2014}.
Safe and smooth control of unicycle robots is essential to autonomously and reliably complete different tasks around obstacles \cite{gul_rahiman_alhady_sahal_CE2019}.
Accurate robot motion prediction is a key enabler for safe unicycle motion design in complex environments \cite{philippsen_siegwart_ICRA2003, chakravarthy_debasish_TSM1998, arslan_koditschek_ICRA2017, isleyen_arslan_RAL2022, arslan_isleyen_arXiv2023}.

In this paper, we introduce a new unicycle headway controller that uses an adaptive headway distance to asymptotically bring both the headway point and the unicycle position to any given goal position. 
For the safety assessment of the close-loop unicycle motion, we propose analytic (circular and triangular) feedback motion prediction methods to accurately bound the closed-loop unicycle motion trajectory under the adaptive headway controller, as illustrated in \reffig{fig.motion_prediction_demo}. 
We apply the proposed unicycle adaptive headway motion control and motion prediction for safe path following around obstacles.

\begin{figure}[t]
\centering
\begin{tabular}{@{}c @{\hspace{0.02\columnwidth}} c @{}}
\includegraphics[width = 0.475\columnwidth]{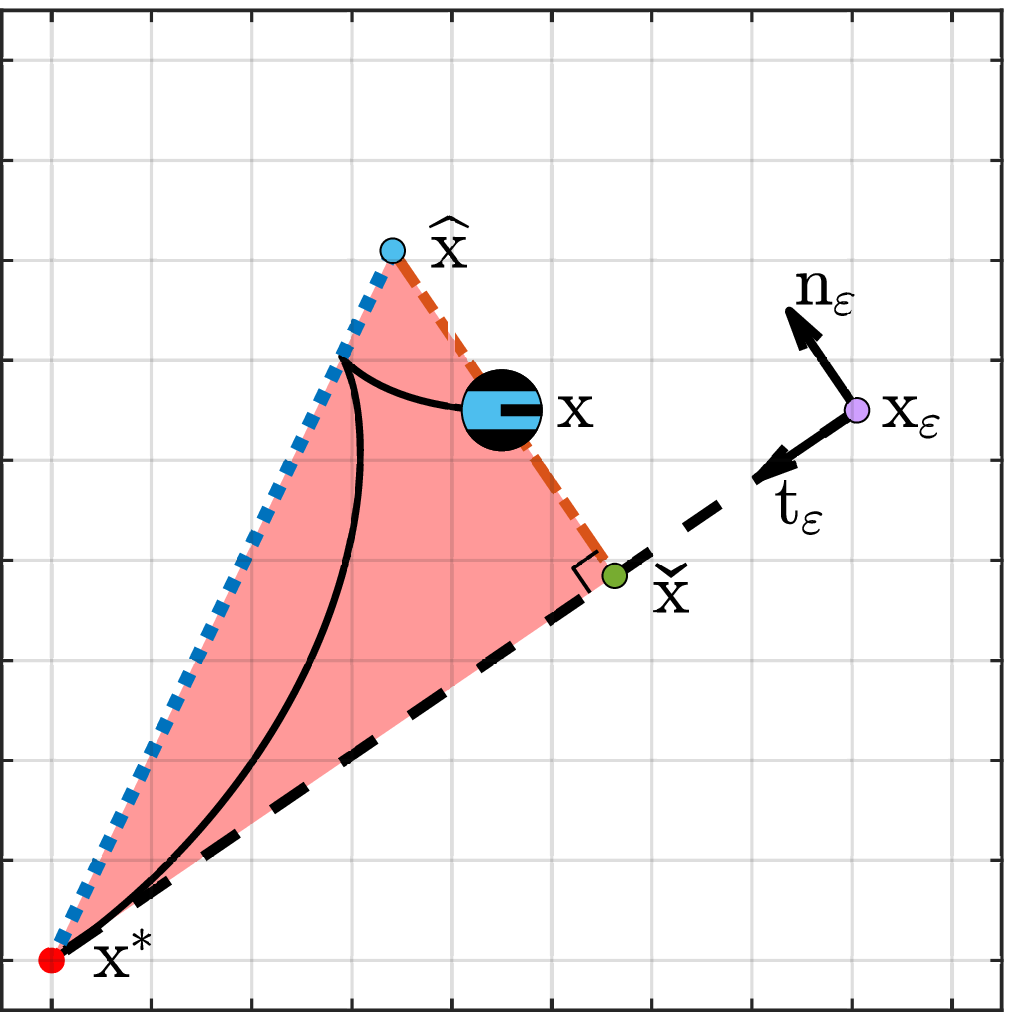} &
\includegraphics[width = 0.475\columnwidth]{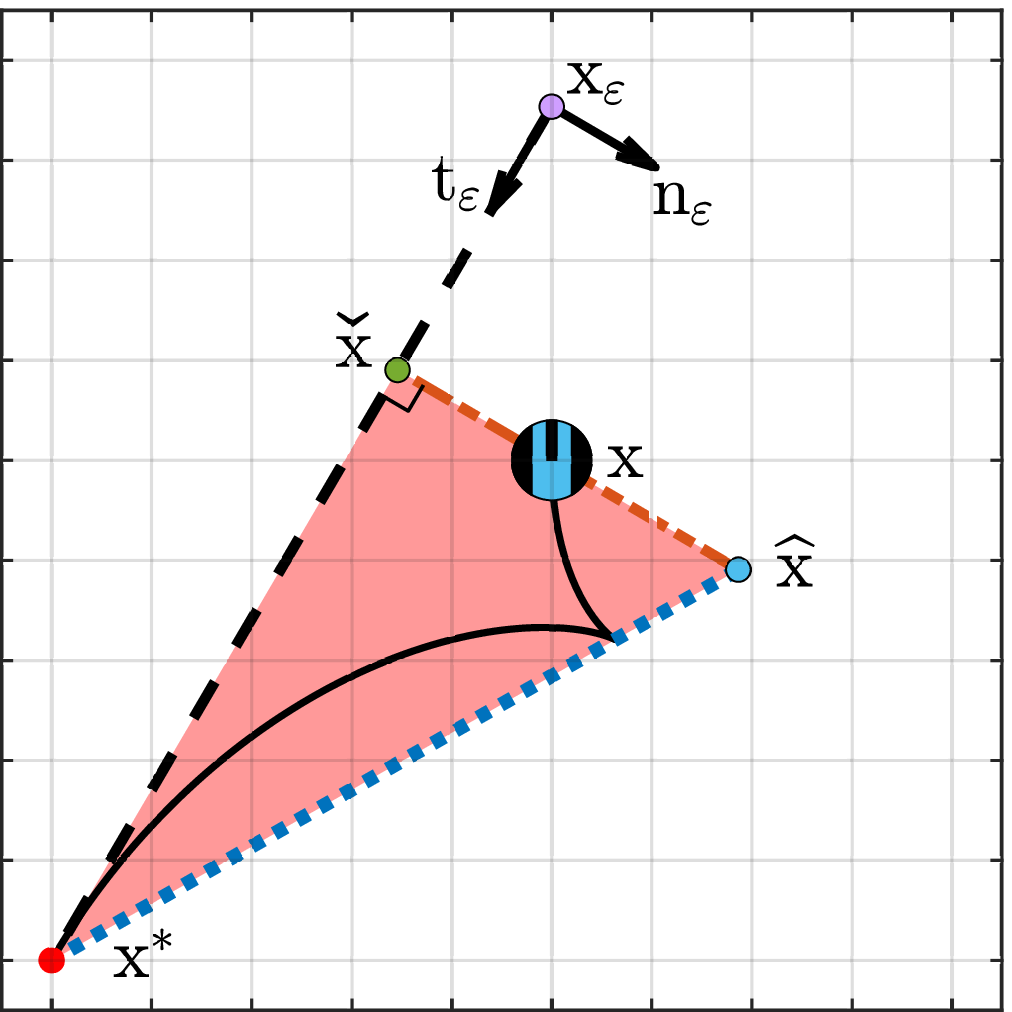} 
\end{tabular}
\caption{Example closed-loop unicycle motion trajectory $\pos(t)$ (solid black line) under the adaptive headway motion control towards a given goal $\goal$ (red point) where the headway point $\hpos$ (purple point) is adaptively placed based on the unicycle position distance to the goal.
The unicycle motion trajectory is bounded by a triangular (red) region defined by the goal position $\goal$, the projected unicycle position $\hproj$ and the extended unicycle position $\hext$. 
}
\label{fig.motion_control_demo}
\vspace{-\baselineskip}
\end{figure}

\subsection{Motivation and Relevant Literature}

Safe autonomous robot motion design requires an accurate understanding and description of the closed-loop robot motion under a feedback motion controller.
Existing control approaches for widely used unicycle mobile robots mainly focus on the stability and convergence of closed-loop unicycle motion \cite{astolfi_JDSMC1999, astolfi_SCL1996,deluca_oriolo_vandittelli_IFAC2000,deluca_oriolo_vandittelli_RAMSETE2002,das_etal_TRA2002}, but pay little attention to the geometric properties of the resulting robot motion, which is essential for safety \cite{isleyen_arslan_RAL2022, isleyen_vandewouw_arslan_arXiv2022}.
A classical feedback linearization approach for unicycle control is based on the use of a headway (a.k.a. offset) point that is at a fixed headway distance in front of the unicycle and is smoothly steered towards a desired goal by embedding some linear headway reference dynamics \cite{das_etal_TRA2002, deluca_oriolo_vandittelli_RAMSETE2002, yun_yamamoto_1992Upenn, petrov_kralov_AMEE2019, novel_campion_bastin_IJRR1995,ren_beard_book2007,gamage_mann_gosine_ACC2010}.
However, the use of a fixed headway distance causes a nonzero steady-state error for the unicycle position since the unicycle robot stops at a headway distance away from the goal while the headway point approaches the goal location.   
In this paper, we propose a unicycle adaptive headway controller based on an adaptive headway distance that asymptotically decreases to zero as the headway point converges to the goal location, which also ensures that the unicycle reaches the goal.

Motion prediction for anticipating the future motion of an autonomous system plays a key role in the safety assessment, control, and planning of mobile robots around obstacles \cite{lefevre_vasquez_laugier_ROBOMECH2014}.
Feedback motion prediction, i.e., finding a bounding motion set on the closed-loop motion trajectory of an autonomous mobile robot under a known control policy, enables informative safety verification and assessment tools \cite{arslan_arXiv2022, isleyen_arslan_RAL2022, arslan_isleyen_arXiv2023,arslan_koditschek_ICRA2017}. 
Reachability analysis offers computational tools for estimating such motion sets for control systems \cite{althoff_dolan_TR02014, althoff_frehse_girard_ARCRAS2021}, but often comes with a high computational cost.
For globally convergent autonomous systems, the notion of forward and backward reachable sets \cite{mitchell_HSCC2007} is trivial because the forward reachable set corresponds to the closed-loop system trajectory due to the autonomous nature of the system whereas the backward reachability set is the entire state space due to the global convergence.
In this paper,  by exploiting the linearity properties of headway control,  we propose new analytic (circular and triangular) motion prediction methods to bound the unicycle robot motion under the adaptive headway controller.
We apply adaptive headway motion control and motion prediction for safe unicycle path following around obstacles and compare the performance of the circular and triangular motion prediction methods with  the forward simulation of the closed-loop unicycle dynamics.

\subsection{Contributions and Organization of the Paper}

This paper introduces new adaptive headway motion control and motion prediction methods for safe unicycle motion design around obstacles.
In \refsec{sec.UnicycleControl}, we present our adaptive headway control approach that asymptotically brings the kinematic unicycle model to a desired goal position by using an adaptive headway distance that depends on the unicycle distance to the goal.
As opposed to a fixed headway distance, our use of an adaptive headway distance allows for both the headway point and the unicycle position to asymptotically reach the goal position. 
In \refsec{sec.UnicyclePrediction}, based on a systematic and careful analysis of the closed-loop unicycle motion, we design two analytical (circular and triangular) feedback motion prediction methods to bound the closed-loop unicycle motion trajectory of the kinematic unicycle model under the adaptive headway control. 
In \refsec{sec.SafeNavigation}, we present an example application of the adaptive headway motion control and motion prediction for safe path following around obstacles in numerical simulations.  
We conclude in \refsec{sec.Conclusions} with a summary of our contributions and future directions.

\section{Unicycle Adaptive Headway Control}
\label{sec.UnicycleControl}

In this section, we briefly describe the standard headway control approach for feedback linearization of the kinematic unicycle robot model, and then present a new unicycle headway control approach with an adaptive headway distance to reach a given goal location.
We highlight important geometric properties of the proposed unicycle adaptive headway controller to understand the resulting unicycle robot motion.

\subsection{Kinematic Unicycle Robot Model}

In the Euclidean plane $ \R^2$, we consider a kinematic unicycle robot whose state is represented by its position $\pos \in \R^2$ and forward orientation angle $\ort \in [ -\pi, \pi )$ that is measured in radians counterclockwise from the horizontal axis.
The equations of motion of the kinematic unicycle robot model are given by
\begin{align} \label{eq.UnicycleDynamics}
\dot{\pos} = \linvel \ovect{\ort} \quad \text{and} \quad \dot{\ort} = \angvel 
\end{align}
where  $\linvel \in \R$ and $\angvel \in \R$ are the scalar control inputs, respectively, specifying the linear and angular velocity of the unicycle robot.
Hence, by definition, the unicycle robot model is underactuated (i.e., three state variables, but only two control inputs) and has the nonholonomic motion constraint of no sideway motion, i.e., $\nvecTsmall{\ort}\dot{\pos} = 0$.

\subsection{Unicycle Headway Motion Control} 

A standard feedback linearization approach for unicycle control \cite{gamage_mann_gosine_ACC2010,koung_etal_ICARCV2020,das_etal_TRA2002} is the use of a headway (a.k.a. offset) point, denoted by $\hpos \in \R^{2}$, that is at a certain (e.g., fixed or varying) headway distance $\hdist \geq 0$ in front of the robot as 
\begin{align} \label{eq.HeadwayPoint}
\hpos \ldf \pos + \hdist \begin{bmatrix} \cos\ort \\ \sin\ort	\end{bmatrix}
\end{align}
so that the nonholonomic unicycle dynamics can be controlled by embedding some desired (e.g., first-order linear) reference dynamics for the headway point.
Under the unicycle dynamics in \refeq{eq.UnicycleDynamics}, the headway point evolves as 
\begin{subequations}\label{eq.HeadwayPointDynamics}
\begin{align}
\hvel &= (\linvel + \dot{\hdist}) \ovect{\ort} + \angvel\,\, \hdist \begin{bmatrix} -\sin\theta \\ \cos\theta \end{bmatrix}
\\
& = \begin{bmatrix} \cos\theta & -\hdist \sin\theta \\ \sin\theta & \hspace{1em}\hdist \cos\theta \end{bmatrix} \begin{bmatrix} \linvel+\dot{\hdist} \\ \angvel \end{bmatrix}.
\end{align}
\end{subequations}
Hence, given a desired reference headway velocity profile $\hvel$  and a desired headway distance function $\hdist$, 
the linear  and angular velocity control inputs for a unicycle robot can be determined 
for $\hdist \neq 0$ as
\begin{align} \label{eq.HeadwayKinematics}
\begin{bmatrix} \linvel \\ \angvel \end{bmatrix} = \begin{bmatrix} \cos\theta & \sin\theta \\ \frac{-\sin\theta}{\hdist} & \frac{\cos\theta}{\hdist} \end{bmatrix} \dot{\pos}_\hdist - \begin{bmatrix}    \dot{\hdist} \\ 0\end{bmatrix}.
\end{align}
For example, a classical choice of reference dynamics for the headway point uses the first-order proportional error feedback to move the headway point $\hpos$ towards a given goal position $\goal \in \R^2$ as \cite{gamage_mann_gosine_ACC2010,koung_etal_ICARCV2020,das_etal_TRA2002}
\begin{align} \label{eq.ReferenceDynamics}
\hvel = - \rgain (\hpos - \goal)
\end{align}
where $\rgain > 0$ is a scalar positive control gain;  
and as a headway distance, the existing literature on unicycle headway motion control \cite{gamage_mann_gosine_ACC2010,koung_etal_ICARCV2020,das_etal_TRA2002} mainly assumes a fixed positive headway distance (i.e., $\hdist > 0$ and  $\dot{\hdist} = 0$), which results in the following standard unicycle headway controller 
\begin{align} \label{eq.UnicycleHeadwayControl}
\begin{bmatrix} \linvel \\ \angvel \end{bmatrix} =  - \rgain \begin{bmatrix} \cos\theta & \sin\theta \\ \frac{-\sin\theta}{\hdist} & \frac{\cos\theta}{\hdist} \end{bmatrix} (\pos - \goal) - \rgain \begin{bmatrix}
\hdist \\ 0
\end{bmatrix}
\end{align} 
that asymptotically brings the headway point to the goal but leaves the robot at a headway distance away from the goal.

\subsection{Unicycle Control with Adaptive Headway Distance}

In order to exactly move the unicycle robot to the goal position using the headway control approach, we consider an adaptive headway distance based on the Euclidean distance of the unicycle position to the goal position as 
\begin{align} \label{eq.HeadwayDist}
\hdist \ldf \hgain \norm{\pos - \goal},
\end{align}
where $\hgain > 0 $ is a fixed scalar coefficient.
Under the unicycle dynamics in \refeq{eq.UnicycleDynamics}, the time rate of change of the headway distance $\hdist$ in \refeq{eq.HeadwayDist} is given by
\begin{align}  \label{eq.HeadwayDistKinematics}
\dot{\hdist} = \hgain \linvel \ovecT{\ort} \frac{(\pos - \goal)}{\norm{\pos - \goal}}
\end{align}
for any $ \pos \neq \goal $, which depends on the linear velocity input $\linvel$.  
Therefore, using the general form of the unicycle headway control in \refeq{eq.HeadwayKinematics}, the first-order headway reference dynamics in \refeq{eq.ReferenceDynamics}, the adaptive headway distance in \refeq{eq.HeadwayDist}, and the headway distance dynamics in \refeq{eq.HeadwayDistKinematics}, we design  an \emph{unicycle adaptive headway motion controller}, denoted by $\ctrl_{\goal}(\pos, \ort) = (\linvel_{\goal}(\pos, \ort), \angvel_{\goal}(\pos, \ort))$, that determines the linear velocity $\linvel_{\goal}(\pos,\ort)$ and the angular velocity $\angvel_{\goal}(\pos,\ort)$ for the unicycle model in \refeq{eq.UnicycleDynamics} as\footnote{\label{fn.fn1}Note that we set $\linvel \!=\! 0$ and $\angvel \!=\! 0$ when the unicycle is at the goal (i.e., $\pos \!=\! \goal$) to resolve the indeterminacy. This naturally introduces a discontinuity in control at the goal position as necessitated by Brockett's theorem \cite{brockett_DGCT1983}. Otherwise, the unicycle adaptive headway motion control in \refeq{eq.AdaptiveHeadwayControl} is locally Lipshitz continuous everywhere, away from the goal position.}
\begin{subequations} \label{eq.AdaptiveHeadwayControl}
\begin{align} 
\linvel_{\goal}(\pos,\ort) &= \frac{\focgain \norm{\goal - \pos} \plist{ \ovecTsmall{\ort} \xvectsmall{\pos} - \hgain }}{ 1- \hgain \ovecTsmall{\ort} \xvectsmall{\pos} } \label{eq.AdaptiveHeadwayControlVelocity}
\\
\angvel_{\goal}(\pos,\ort) &= \frac{\focgain}{\hgain} \nvecTsmall{\ort} \xvectsmall{\pos}
\end{align}
\end{subequations}
where $\focgain > 0$ and $1 > \hgain > 0$.
Here, it is important to remark that the unity upper bound on $\hgain$ is not only a sufficient but also a necessary condition to avoid the singularity in the linear velocity control in \refeq{eq.AdaptiveHeadwayControlVelocity} and also to ensure the global convergence of our unicycle adaptive headway controller (see \refprop{prop.GlobalStability}).

The major significance of the adaptive headway distance in \refeq{eq.HeadwayDist} over a fixed headway distance is that being at the goal is the same for both the headway point and the unicycle position.
\begin{lemma} \label{lem.SimultaneousConvergence}
\emph{(Being at the Goal)}
Regardless of the unicycle orientation $\ort \in [\pi, \pi)$, the unicycle position $\pos $ is at the goal $\goal$ if and only if the headway point $\hpos$ associated with the adaptive headway distance $\hdist$ in \refeq{eq.HeadwayDist} is at the goal $\goal$ , i.e.,
\begin{align} \label{eq.Convergence}
\pos \!=\! \goal \Longleftrightarrow \hpos \!=\! \goal \quad \forall \ort \in [-\pi, \pi).
\end{align}
\end{lemma}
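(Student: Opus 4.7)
The plan is to prove the two implications of the biconditional separately, using the definition of the headway point $\hpos = \pos + \hdist [\cos\ort, \sin\ort]^\tr$ together with the adaptive headway distance $\hdist = \hgain \norm{\pos - \goal}$ and the standing assumption $0 < \hgain < 1$.

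For the forward implication ($\pos = \goal \Rightarrow \hpos = \goal$), I would simply substitute $\pos = \goal$ into the definition of $\hdist$ in \refeq{eq.HeadwayDist}, giving $\hdist = 0$, and then read off from \refeq{eq.HeadwayPoint} that $\hpos = \pos = \goal$ irrespective of $\ort$. This direction is immediate and holds for any nonnegative headway distance function that vanishes at the goal.

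For the reverse implication ($\hpos = \goal \Rightarrow \pos = \goal$), I would substitute the adaptive headway distance into $\hpos = \goal$ to obtain
\begin{equation*}
\goal - \pos = \hgain \norm{\pos - \goal} \ovect{\ort}.
\end{equation*}
Taking Euclidean norms of both sides and using $\norm{\ovect{\ort}} = 1$ yields $\norm{\goal - \pos} = \hgain \norm{\goal - \pos}$, i.e., $(1 - \hgain)\norm{\goal - \pos} = 0$. Since $\hgain < 1$ by hypothesis, the factor $(1 - \hgain)$ is strictly positive, forcing $\norm{\goal - \pos} = 0$ and hence $\pos = \goal$.

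There is no serious obstacle here: both directions are one-liners once the adaptive headway distance is substituted. The only subtlety worth highlighting in the proof is the indispensable role of the bound $\hgain < 1$ in the reverse direction; if $\hgain$ were allowed to equal $1$, the equation $(1 - \hgain)\norm{\goal - \pos} = 0$ would be satisfied trivially and the implication would fail, which is consistent with the singularity warning made just after \refeq{eq.AdaptiveHeadwayControl}.
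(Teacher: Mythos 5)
Your proof is correct and follows essentially the same route as the paper's: the forward direction reads off $\hdist = 0$ from \refeq{eq.HeadwayDist}, and the reverse direction takes norms in $\goal - \pos = \hdist\,\ovectsmall{\ort}$ to get $(1-\hgain)\norm{\goal - \pos} = 0$ and invoke $\hgain < 1$. Your explicit remark on why $\hgain < 1$ is indispensable is a nice touch but does not change the argument.
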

\begin{proof}
See \refapp{app.SimultaneousConvergence}.
\end{proof}

Hence,  as the headway point is asymptotically approaching the goal location, the unicycle robot also reaches the goal under the adaptive headway controller.

\begin{proposition} \label{prop.GlobalStability}
\emph{(Global Convergence)}
The unicycle adaptive headway motion controller $\ctrl_{\goal}$ in \refeq{eq.AdaptiveHeadwayControl} asymptotically brings all initial unicycle states $(\pos, \ort)$ in $\R^{2} \times [-\pi, \pi)$ to any given goal position $\goal \in \R^2$, that is to say, the closed-loop trajectory $(\pos(t), \ort(t))$ of the unicycle dynamics in \refeq{eq.UnicycleDynamics} under the adaptive headway controller in \refeq{eq.AdaptiveHeadwayControl} satisfies
\begin{align}
\lim_{t\rightarrow \infty} \pos(t) = \goal.
\end{align}   
\end{proposition}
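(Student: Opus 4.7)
The plan is to exploit the design principle behind the adaptive headway controller in \refeq{eq.AdaptiveHeadwayControl}: it was constructed precisely so that, in closed loop, the headway point $\hpos$ obeys the linear first-order reference dynamics \refeq{eq.ReferenceDynamics}, which is globally exponentially stable to $\goal$. Once this is established, convergence of $\pos$ to $\goal$ will follow from a simple geometric bound together with \reflem{lem.SimultaneousConvergence}.

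First, I would verify that the controller is well-posed away from $\pos = \goal$. Since $\ovect{\ort}$ and $\xvectsmall{\pos}$ are unit vectors, $\absval{\ovecTsmall{\ort}\xvectsmall{\pos}} \leq 1$, and together with $\hgain \in (0,1)$ this gives
\begin{align*}
1 - \hgain \ovecTsmall{\ort}\xvectsmall{\pos} \,\geq\, 1 - \hgain \,>\, 0,
\end{align*}
so the denominator in \refeq{eq.AdaptiveHeadwayControlVelocity} never vanishes. This confirms that $\hgain < 1$ is not only sufficient but also essentially necessary for a nonsingular control law, and ensures that $\ctrl_{\goal}$ is locally Lipschitz on $(\R^2 \setminus \{\goal\}) \times [-\pi, \pi)$ (see the footnote) so that unique closed-loop trajectories exist forward in time.

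Second, I would substitute \refeq{eq.AdaptiveHeadwayControl} into the headway point dynamics \refeq{eq.HeadwayPointDynamics}, using the expression for $\dot{\hdist}$ in \refeq{eq.HeadwayDistKinematics}. This is a routine algebraic verification: the $(\linvel + \dot{\hdist})$ term combines with the $\angvel \hdist$ term in exactly the way needed so that the cross-cancellations produce
\begin{align*}
\hvel \,=\, -\focgain(\hpos - \goal)
\end{align*}
everywhere away from the goal. Integrating this linear ODE gives $\hpos(t) - \goal = e^{-\focgain t}\bigl(\hpos(0) - \goal\bigr)$, so $\hpos(t) \to \goal$ exponentially.

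Finally, to translate headway convergence into unicycle convergence, I would use the triangle inequality on the definition of $\hpos$ with the adaptive headway distance \refeq{eq.HeadwayDist}:
\begin{align*}
\norm{\hpos - \goal} \,=\, \norm{(\pos - \goal) + \hgain\norm{\pos - \goal}\ovect{\ort}} \,\geq\, (1-\hgain)\norm{\pos - \goal},
\end{align*}
so $\norm{\pos(t) - \goal} \leq \tfrac{1}{1-\hgain}\norm{\hpos(t) - \goal} \to 0$. In particular, this bound also rules out finite-time escape and shows consistency with \reflem{lem.SimultaneousConvergence}. The only delicate point is handling the control discontinuity at $\pos = \goal$, but this is benign since trajectories only approach the goal asymptotically and are defined for all $t \geq 0$. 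The main ``obstacle'' is really just the careful algebraic verification in Step 2; conceptually, the argument reduces to showing that the adaptive headway choice \refeq{eq.HeadwayDist} does not destroy the linear reference dynamics \refeq{eq.ReferenceDynamics} that the headway construction was designed to enforce.
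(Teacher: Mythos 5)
Your proposal is correct and follows the same skeleton as the paper's proof: argue that, by construction, the closed-loop headway point $\hpos$ obeys the linear reference dynamics \refeq{eq.ReferenceDynamics} and hence converges (exponentially) to $\goal$, then transfer this convergence to the unicycle position. The one place you genuinely diverge is the transfer step, and your version is the stronger one. The paper closes by citing \reflem{lem.SimultaneousConvergence}, which is only the pointwise equivalence $\pos = \goal \Longleftrightarrow \hpos = \goal$; strictly speaking, such an if-and-only-if at the exact goal does not by itself convert $\hpos(t) \to \goal$ into $\pos(t) \to \goal$ without an additional properness or continuity argument. Your reverse-triangle-inequality bound $\norm{\hpos - \goal} \geq (1-\hgain)\norm{\pos - \goal}$, which uses $\hdist = \hgain \norm{\pos - \goal}$ and $\hgain < 1$, supplies exactly the quantitative estimate that makes this implication immediate, and it simultaneously explains why $\hgain < 1$ is the right condition (it is the same cancellation that drives the necessity direction in the paper's proof of \reflem{lem.SimultaneousConvergence}). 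The remaining content of your write-up (nonvanishing denominator since $1 - \hgain\ovecTsmall{\ort}\xvectsmall{\pos} \geq 1 - \hgain > 0$, local Lipschitzness away from the goal, and the algebraic verification that the control law realizes \refeq{eq.ReferenceDynamics}) matches what the paper asserts "by construction," so nothing is missing; just make sure you actually carry out the Step 2 substitution rather than leaving it as a claim, since that identity is the load-bearing part of the argument.
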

\begin{proof}
By construction, the unicycle adaptive headway distance control policy in \refeq{eq.AdaptiveHeadwayControl} realizes the first-order headway-point reference dynamics in \refeq{eq.ReferenceDynamics}. 
The headway point $\hpos$ under the reference dynamics in \refeq{eq.ReferenceDynamics} asymptotically reaches the goal position since the squared Euclidean distance of the headway point $\hpos(\pos,\ort)$ to the goal decreases over time as
\begin{align}
\frac{\diff}{\diff t} \norm{\hpos - \goal}^2 \!=\! - 2\focgain \norm{\hpos - \goal}^2 \leq 0. 
\end{align}
Therefore, we also have the global convergence of the unicycle position to the goal position since being at the goal is the same for both the headway point and the unicycle position (\reflem{lem.SimultaneousConvergence}), i.e., $\pos = \goal \Longleftrightarrow \hpos = \goal$, due to the specific form of the adaptive headway distance in \refeq{eq.HeadwayDist} 
\end{proof}

\subsection{Geometric Properties of Adaptive Headway Controller}
\label{sec.GeometricPropertiesAHC}

In this part, we present some important geometric properties of the unicycle robot motion under the adaptive headway controller that form the basis for the unicycle feedback motion prediction later in \refsec{sec.UnicyclePrediction}.
Since the headway point $\hpos$ moves along a straight line segment towards the goal $\goal$ under the headway reference dynamics in \refeq{eq.ReferenceDynamics}, it is convenient to define the tangent vector $\thead$ and the normal vector $\nhead$ of the motion of the headway point as 
\begin{subequations}\label{eq.HeadwayTangentNormal}
\begin{align}
\thead &\ldf \left \{ \begin{array}{cl}
\dfrac{\goal - \hpos}{\norm{\goal - \hpos}} & \text{, if } \hpos \neq \goal 
\\
0 & \text{, otherwise}
\end{array}
\right.  
\label{eq.HeadwayTangent}
\\
\nhead &\ldf \left \{ 
\begin{array}{cl}
\rotmat_{+\frac{\pi}{2}} \thead & \text{, if } \tr{(\goal - \pos)} \nvect{\theta} \geq 0 
\\
\rotmat_{-\frac{\pi}{2}} \thead & \text{, otherwise } 
\end{array}
\right.
\label{eq.HeadwayNormal}
\end{align} 
\end{subequations}
where $ \rotmat_{\phi} :=  \begin{bmatrix}  \cos \phi & -\sin \phi \\ \sin \phi & \cos \phi \end{bmatrix}$ denotes the 2D rotation matrix with an angle of $\phi$.
Observe that both the tangent  $\thead$ and the normal $\nhead$ are constant during the unicycle motion under the adaptive headway controller away from the goal.
We also define the projected robot position $\hproj$ and the extended robot position $\hext$ with respect to the motion of the headway point as 
\begin{subequations}\label{eq.ProjectedExtendedPosition}
\begin{align}
\hproj & \ldf \goal + \thead \tr{\thead} (\pos - \goal) 
\label{eq.HeadwayProjectedDefinition}
\\
\hext & \ldf \hproj + \frac{\hgain}{\sqrt{1 - \hgain^2}} \norm{\hproj - \goal} \nhead 
\label{eq.HeadwayExtendedDefinition}
\end{align}
\end{subequations}
where the distances of the projected and extended robot positions to the goal satisfy\reffn{fn.ProjectedExtendedDistance}
\begin{subequations}
\begin{align}
\norm{\hproj - \goal} &= \tr{\thead} (\goal - \pos) 
\\
\norm{\hext - \goal} &= \frac{1}{\sqrt{1 - \hgain^2}} \norm{\hproj - \goal}.
\end{align}
\end{subequations}

\addtocounter{footnote}{1}
\footnotetext{\label{fn.ProjectedExtendedDistance}For any $0 < \hgain < 1$ and  $\pos \neq \goal$, the following relations hold 
\begin{align*}
\tr{\thead}(\goal - \pos) &= \frac{\norm{\goal - \pos}^2}{\norm{\goal - \hpos}}\plist{1 - \hgain \ovecTsmall{\ort} \frac{(\goal - \pos)}{\norm{\goal - \pos}}} \geq 0
\\
\norm{\goal - \hproj} &= \absval{\tr{\thead}(\goal - \pos)} =  \tr{\thead}(\goal - \pos).
\end{align*}
}

A critical property of the projected and extended unicycle positions is that they bound the actual unicycle position.
\begin{lemma} \label{lem.RobotbtwProjectedandExtended}
\emph{(Unicycle Position Bound)}
For any unicycle state $(\pos, \ort) \in \R^2 \times [-\pi, \pi)$,  the unicycle position $\pos$ is in between the projected unicycle position $\hproj$ and the extended unicycle position $\hext$, i.e.,
\begin{align}
    \pos \in \blist{\hproj, \hext},
\end{align}
where $\blist{\vect{a}, \vect{b}}:= \clist{\alpha \vect{a} + (1 - \alpha) \vect{b}  \, \big | \, \alpha \in [0, 1]}$ denotes the straight line segment between points $\vect{a}$  and  $\vect{b}$.
\end{lemma}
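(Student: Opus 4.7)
The plan is to decompose $\pos - \hproj$ in the orthogonal frame $\clist{\thead, \nhead}$ aligned with the headway-to-goal ray and show that its $\nhead$-coefficient lies in the required interval. From \refeq{eq.HeadwayProjectedDefinition}, $\hproj$ is the orthogonal projection of $\pos$ onto the line through $\goal$ in direction $\thead$, so $\pos - \hproj$ is perpendicular to $\thead$ and hence parallel to $\nhead$, which lets me write $\pos - \hproj = c\, \nhead$ for some scalar $c \in \R$. Since \refeq{eq.HeadwayExtendedDefinition} parameterizes $\blist{\hproj, \hext}$ as $\clist{\hproj + \lambda \nhead : 0 \leq \lambda \leq \tfrac{\hgain}{\sqrt{1 - \hgain^2}} \norm{\hproj - \goal}}$, it suffices to establish $0 \leq c \leq \tfrac{\hgain}{\sqrt{1 - \hgain^2}} \norm{\hproj - \goal}$, with the degenerate case $\pos = \goal$ being immediate since then $\hproj = \hext = \pos$.

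For the lower bound $c \geq 0$, I would dot both sides with $\nhead$ and use $\nhead \perp \thead$ to obtain $c = \nhead \cdot (\pos - \goal)$. Writing $\goal - \pos = r\,\thead + \hdist\,\ovect{\ort}$, where $r := \norm{\goal - \hpos}$ comes from combining $\hpos = \pos + \hdist\,\ovect{\ort}$ with $\thead = (\goal - \hpos)/r$, this becomes $c = -\hdist\,(\nhead \cdot \ovect{\ort})$. A one-line component computation yields the skew-symmetry identity $(\rotmat_{+\frac{\pi}{2}}\thead) \cdot \ovect{\ort} = -\tr{\thead}\nvect{\ort}$; combined with the fact that $\tr{(\goal - \pos)}\nvect{\ort} = r\,\tr{\thead}\nvect{\ort}$ has the same sign as $\tr{\thead}\nvect{\ort}$ (since $r \geq 0$), the case-split definition \refeq{eq.HeadwayNormal} of $\nhead$ collapses in both branches to $\nhead \cdot \ovect{\ort} = -\absval{\tr{\thead}\nvect{\ort}}$, whence $c = \hdist\,\absval{\tr{\thead}\nvect{\ort}} \geq 0$.

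For the upper bound, I would introduce the angle $\alpha$ between $\ovect{\ort}$ and $\thead$, so that $\tr{\thead}\ovect{\ort} = \cos\alpha$. Expanding the projection as $\pos - \hproj = -\hdist(\mat{I} - \thead\tr{\thead})\ovect{\ort}$ gives $c = \hdist\,\absval{\sin\alpha}$, while a direct expansion yields $\norm{\hproj - \goal} = \tr{\thead}(\goal - \pos) = r + \hdist \cos\alpha$, which is non-negative by the footnote identity following \refeq{eq.HeadwayExtendedDefinition}. Since both sides are non-negative, squaring turns the target inequality into $\hdist^2 \sin^2 \alpha \leq \hgain^2 \plist{(r + \hdist \cos\alpha)^2 + \hdist^2 \sin^2 \alpha}$. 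The bracketed quantity simplifies via $\sin^2 + \cos^2 = 1$ to $r^2 + 2 r \hdist \cos\alpha + \hdist^2 = \norm{r \thead + \hdist \ovect{\ort}}^2 = \norm{\goal - \pos}^2$, so the defining relation $\hdist = \hgain \norm{\pos - \goal}$ of \refeq{eq.HeadwayDist} reduces the bound to $\sin^2\alpha \leq 1$, which always holds.

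The step I expect to be the main obstacle is the sign bookkeeping for $\nhead$: its case-split definition \refeq{eq.HeadwayNormal} is phrased through the body-frame perpendicular $\nvect{\ort}$ rather than as a direct rotation of $\thead$, so reconciling the two sides requires both the skew-symmetry identity above and the observation that $r \geq 0$ preserves the sign of the discriminant in \refeq{eq.HeadwayNormal}. Once that is secured, the magnitude estimate is essentially Pythagoras combined with a single invocation of the adaptive-headway identity $\hdist = \hgain \norm{\pos - \goal}$.
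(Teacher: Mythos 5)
Your proof is correct, and its skeleton coincides with the paper's: both arguments work in the orthonormal frame $\clist{\thead, \nhead}$, observe that $\pos - \hproj$ has no $\thead$-component, and then show that its $\nhead$-component lies between $0$ and $\norm{\hext - \hproj} = \tfrac{\hgain}{\sqrt{1-\hgain^2}}\norm{\hproj - \goal}$. The lower bound is essentially the same computation in both (the paper writes it as $-\hdist \ovecTsmall{\ort}\nhead = \tfrac{\hdist}{\norm{\goal - \hpos}}\absval{\nvecTsmall{\ort}(\goal - \pos)} \geq 0$, which is exactly your $\hdist\,\absval{\tr{\thead}\nvect{\ort}}$ after cancelling $r = \norm{\goal - \hpos}$). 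Where you genuinely diverge is the upper bound: the paper bounds $-\hdist\ovecTsmall{\ort}\nhead$ crudely by $\hdist$ and then invokes \reflem{lem.DistanceMetricsOrder} to get $\tfrac{\hgain}{\sqrt{1-\hgain^2}}\norm{\hproj - \goal} \geq \hgain\norm{\pos - \goal} = \hdist$, whereas you keep the exact value $\hdist\absval{\sin\alpha}$ and close the argument with the Pythagorean identity $(r + \hdist\cos\alpha)^2 + \hdist^2\sin^2\alpha = \norm{\goal - \pos}^2$, reducing everything to $\sin^2\alpha \leq 1$. Note that your squared inequality is equivalent to $\norm{\hproj - \goal}^2 \geq (1 - \hgain^2)\norm{\pos - \goal}^2$, i.e., you are re-proving the upper-bound half of \reflem{lem.DistanceMetricsOrder} inline; the paper factors that computation out into a lemma it needs elsewhere anyway. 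Both routes are sound: the paper's is shorter modulo its lemma dependency, while yours is self-contained, uses the sharper exact magnitude rather than the $\absval{\ovecTsmall{\ort}\nhead} \leq 1$ estimate, and makes the sign bookkeeping for the case-split definition of $\nhead$ in \refeq{eq.HeadwayNormal} fully explicit.
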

\begin{proof}
See \refapp{app.RobotbtwProjectedandExtended}.
\end{proof}

\begin{lemma} \label{lem.DistanceMetricsOrder}
\emph{(Unicycle Distance-to-Goal Bound)}
For any unicycle state $(\pos, \ort) \in \R^2 \times [-\pi, \pi)$, the Euclidean distance $\norm{\pos-\goal}$ of the unicycle position to the goal position $\goal$ is bounded below and above by the distances of the projected and extended unicycle positions, $\hproj$ and $\hext$, to the goal as
\begin{align}
\norm{\hproj - \goal} \leq \norm{\pos - \goal} \leq \norm{\hext - \goal}. 
\end{align}
\end{lemma}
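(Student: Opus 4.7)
The plan is to exploit the right-triangle geometry of the four points $\goal$, $\hproj$, $\hext$, and $\pos$ induced by the definitions in \refeq{eq.HeadwayTangentNormal} and \refeq{eq.ProjectedExtendedPosition}. First I would note from \refeq{eq.HeadwayProjectedDefinition} that $\hproj - \goal$ is collinear with $\thead$ (in fact, equal to $-\norm{\hproj - \goal}\thead$ by the footnote relations), while from \refeq{eq.HeadwayExtendedDefinition} the offset $\hext - \hproj = \tfrac{\hgain}{\sqrt{1-\hgain^2}}\norm{\hproj - \goal}\,\nhead$ is collinear with $\nhead$. Since $\nhead \perp \thead$ by \refeq{eq.HeadwayNormal}, one obtains the key orthogonality $(\hext - \hproj) \perp (\hproj - \goal)$, so that $\hproj$ is the right-angle vertex of the triangle with hypotenuse $[\goal, \hext]$, and this is the structural fact the rest of the argument will ride on.

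For the lower bound, I would decompose $\pos - \goal = (\pos - \hproj) + (\hproj - \goal)$. Because $\hproj$ is by construction the orthogonal projection of $\pos$ onto the line $\{\goal + \lambda \thead : \lambda \in \R\}$, the residual $\pos - \hproj$ is perpendicular to $\thead$, hence perpendicular to $\hproj - \goal$. The Pythagorean theorem then yields $\norm{\pos - \goal}^2 = \norm{\pos - \hproj}^2 + \norm{\hproj - \goal}^2 \geq \norm{\hproj - \goal}^2$, which gives the claimed lower bound.

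For the upper bound, I would invoke \reflem{lem.RobotbtwProjectedandExtended} to express $\pos = \alpha \hproj + (1-\alpha)\hext$ for some $\alpha \in [0,1]$, equivalently $\pos - \hproj = (1-\alpha)(\hext - \hproj)$. Writing $\pos - \goal = (\hproj - \goal) + (1-\alpha)(\hext - \hproj)$ and applying the Pythagorean theorem using the orthogonality established above gives $\norm{\pos - \goal}^2 = \norm{\hproj - \goal}^2 + (1-\alpha)^2 \norm{\hext - \hproj}^2 \leq \norm{\hproj - \goal}^2 + \norm{\hext - \hproj}^2 = \norm{\hext - \goal}^2$, as required.

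The only mildly non-routine step is verifying the orthogonality $(\hext - \hproj) \perp (\hproj - \goal)$ cleanly from the definitions; after that, both inequalities reduce to the same Pythagorean identity applied in two different directions, and the previous lemma does the heavy lifting for the upper bound.
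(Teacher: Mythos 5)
Your lower-bound argument is sound: for $\pos \neq \goal$ the vector $\thead$ is a unit vector and $\hproj$ is the orthogonal projection of $\pos$ onto the line through $\goal$ with direction $\thead$, so the Pythagorean identity $\norm{\pos - \goal}^2 = \norm{\pos - \hproj}^2 + \norm{\hproj - \goal}^2$ immediately gives $\norm{\hproj - \goal} \leq \norm{\pos - \goal}$. This is the same content as the paper's Cauchy--Schwarz step, just phrased geometrically. (You should still dispatch the degenerate case $\pos = \goal$, where $\thead = 0$ and all points coincide.)

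The upper bound, however, has a genuine gap: it is circular. You invoke \reflem{lem.RobotbtwProjectedandExtended} to place $\pos$ on the segment $[\hproj, \hext]$, but the paper's proof of that lemma relies on the very inequality you are trying to establish --- its third condition $\tr{(\pos - \hext)}\nhead \leq 0$ is verified there by citing \reflem{lem.DistanceMetricsOrder} to get $\frac{\hgain}{\sqrt{1-\hgain^2}}\norm{\hproj - \goal} \geq \hgain \norm{\goal - \pos}$, which is exactly $\norm{\hext - \goal} \geq \norm{\pos - \goal}$. So \reflem{lem.RobotbtwProjectedandExtended} is not available to you here. More tellingly, your argument never actually proves the quantitative fact that carries all the content of the upper bound. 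By your own orthogonal decomposition, $\norm{\pos - \goal}^2 = \norm{\hproj - \goal}^2 + \norm{\pos - \hproj}^2$, so the claim $\norm{\pos - \goal} \leq \norm{\hext - \goal} = \frac{1}{\sqrt{1-\hgain^2}}\norm{\hproj - \goal}$ is equivalent to $\norm{\hproj - \goal}^2 \geq (1 - \hgain^2)\norm{\pos - \goal}^2$, and that inequality must be derived directly from the definitions. The paper does so by setting $\alpha = \ovecTsmall{\ort} \xvectsmall{\pos} \in [-1,1]$ and computing $\norm{\hproj - \goal}^2 = \norm{\pos - \goal}^2 \, \frac{(1 - \hgain\alpha)^2}{1 - 2\hgain\alpha + \hgain^2} = \norm{\pos-\goal}^2\left(1 - \hgain^2\frac{1-\alpha^2}{1 - \alpha^2 + (\hgain - \alpha)^2}\right) \geq (1-\hgain^2)\norm{\pos - \goal}^2$. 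You need to supply this (or an equivalent) computation rather than outsource it to a lemma that is itself proved using the present one.
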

\begin{proof}
See \refapp{app.DistanceMetricsOrder}.
\end{proof}

Due to their strong geometric relation with the unicycle position in  \reflem{lem.RobotbtwProjectedandExtended} and \reflem{lem.DistanceMetricsOrder}, it is important to understand how the projected and extended unicycle positions change under the adaptive headway controller in order to understand the closed-loop unicycle motion. 

\begin{lemma}\label{lem.ProjectedExtendedDynamics}
\emph{(Motion of Projected/Extended Unicycle Positions)}
For any unicycle state $(\pos, \ort) \in \R^2 \!\times\! [-\pi, \pi)$, the projected unicycle position $\hproj$ and the extended unicycle position $\hext$ evolve under the unicycle adaptive headway controller in \refeq{eq.AdaptiveHeadwayControl} towards any given goal position $\goal \in \R^{2}$ as 
\begin{align}
\dot{\hproj} =  - \kappa \plist{\hproj - \goal} 
\quad \text{and} \quad
\dot{\hext} =  - \kappa \plist{\hext - \goal}
\end{align}
where $\kappa = \frac{\norm{\pos - \goal}^2}{\norm{\hproj - \goal}} \plist{\ovecTsmall{\ort} \thead}^2$  if $\pos \neq \goal$, (and zero otherwise).
Hence, the distances of the projected and extended unicycle positions to the goal are nonincreasing, i.e.,
\begin{align}
&\frac{\diff }{\diff t} \norm{\hproj-\goal}^2  \leq 0, \text{and }\frac{\diff }{\diff t}  \norm{\hext-\goal}^2 \leq 0
\end{align}
and their solution trajectories satisfy for all $t \geq 0$ that
\begin{align}
&\hproj(t) \in \blist{\goal, \hproj(0)} 
\quad \text{and} \quad
\hext(t) \in \blist{\goal, \hext(0)}. 
\end{align}
\end{lemma}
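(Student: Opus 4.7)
The guiding observation is that the adaptive headway controller realizes the linear reference dynamics $\hvel = -\focgain(\hpos - \goal)$, which drives $\hpos - \goal$ exponentially along a fixed ray. Hence the tangent $\thead$ is invariant along every closed-loop trajectory with $\hpos \neq \goal$ (equivalently, by \reflem{lem.SimultaneousConvergence}, with $\pos \neq \goal$). A parallel argument, using that the angular control steers the heading so that the signed lateral offset $\tr{(\goal-\pos)}\nvect{\ort}$ does not change sign, shows that the normal $\nhead$ is likewise constant along solutions. These two constancy facts reduce $\hproj$ and $\hext$ to fixed unit directions scaled by signed distances, making the dynamics a short differentiation.

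\textbf{Dynamics of $\hproj$.} Differentiating $\hproj = \goal + \thead\tr{\thead}(\pos - \goal)$ with $\dot{\thead} = 0$ and $\dot{\pos} = \linvel\ovect{\ort}$ yields $\dot{\hproj} = \linvel\bigl(\tr{\thead}\ovect{\ort}\bigr)\thead$. Since footnote~\ref{fn.ProjectedExtendedDistance} gives $\tr{\thead}(\pos - \goal) = -\norm{\hproj - \goal}$, we have $\hproj - \goal = -\norm{\hproj - \goal}\thead$, and therefore $\dot{\hproj} = -\kappa(\hproj - \goal)$ with $\kappa := \linvel\bigl(\tr{\thead}\ovect{\ort}\bigr)/\norm{\hproj - \goal}$. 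To recover the stated closed form for $\kappa$, I substitute the velocity law \refeq{eq.AdaptiveHeadwayControlVelocity}, use $\goal - \hpos = \norm{\goal - \pos}(\xvectsmall{\pos} - \hgain\ovect{\ort})$ to extract $\tr{\thead}\ovect{\ort} = (\ovecTsmall{\ort}\xvectsmall{\pos} - \hgain)\,\norm{\goal-\pos}/\norm{\goal-\hpos}$, and apply the footnote identity for $\norm{\hproj - \goal}$. After cancelling the common factors $(1 - \hgain\ovecTsmall{\ort}\xvectsmall{\pos})$ and $\sqrt{1 - 2\hgain\ovecTsmall{\ort}\xvectsmall{\pos} + \hgain^2}$, $\kappa$ collapses into the asserted expression and is manifestly nonnegative.

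\textbf{Dynamics of $\hext$, monotonicity, and containment.} Substituting $\hproj - \goal = -\norm{\hproj - \goal}\thead$ into \refeq{eq.HeadwayExtendedDefinition} gives $\hext - \goal = \norm{\hproj - \goal}\bigl(-\thead + \tfrac{\hgain}{\sqrt{1-\hgain^2}}\nhead\bigr)$, whose parenthesized factor is a constant vector along solutions. From $\dot{\hproj} = -\kappa(\hproj - \goal)$, differentiating $\norm{\hproj - \goal}^2 = \tr{(\hproj-\goal)}(\hproj - \goal)$ produces $\tfrac{\diff}{\diff t}\norm{\hproj - \goal} = -\kappa\norm{\hproj - \goal}$, which immediately yields $\dot{\hext} = -\kappa(\hext - \goal)$. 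Both nonincreasing claims then follow from $\tfrac{\diff}{\diff t}\norm{v}^2 = -2\kappa\norm{v}^2 \leq 0$ for $v \in \{\hproj - \goal,\, \hext - \goal\}$, since $\kappa \geq 0$. Integrating each decoupled scalar linear ODE gives $\hproj(t) - \goal = \exp\!\bigl(-\int_{0}^{t}\kappa(\tau)\,\diff\tau\bigr)(\hproj(0) - \goal)$ and the analogous formula for $\hext$; each trajectory is therefore a convex combination of the goal and its initial value, delivering the asserted segment containment. The degenerate case $\pos = \goal$ is covered by the convention $\linvel = \angvel = 0$ with $\kappa := 0$.

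\textbf{Main obstacle.} The principal technical step is the algebraic consolidation of $\kappa = \linvel\bigl(\tr{\thead}\ovect{\ort}\bigr)/\norm{\hproj - \goal}$ into the symmetric closed form stated in the lemma — the only stage where the detailed structure of \refeq{eq.AdaptiveHeadwayControl} beyond the linear reference dynamics is used. Careful bookkeeping is needed to cancel the factor $\sqrt{1 - 2\hgain\ovecTsmall{\ort}\xvectsmall{\pos} + \hgain^2} = \norm{\goal - \hpos}/\norm{\goal - \pos}$ that appears both in $\tr{\thead}\ovect{\ort}$ and implicitly through $\linvel$. Once $\kappa \geq 0$ is secured, the monotonicity and segment-containment conclusions are routine consequences of the scalar linear ODE $\dot{r} = -\kappa(t)\,r$.
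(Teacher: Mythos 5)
Your proposal is correct and follows essentially the same route as the paper's proof: exploit the constancy of $\thead$ and $\nhead$ along closed-loop trajectories, differentiate the definitions to get $\dot{\hproj} = -\kappa(\hproj - \goal)$ with $\kappa = \linvel_{\goal}(\pos,\ort)\,\ovecTsmall{\ort}\thead/\norm{\hproj-\goal}$, verify $\kappa \geq 0$ by substituting the control law, and read off the monotonicity and segment containment from the scalar linear ODE. The only cosmetic remark is that the precise closed form of $\kappa$ is immaterial to the lemma's conclusions (only $\kappa \geq 0$ is used), so the algebraic consolidation you flag as the main obstacle carries less weight than you suggest.
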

\begin{proof}
See \refapp{app.ProjectedExtendedDynamics}.
\end{proof}

Finally, as summarized below, two important geometric features of the unicycle adaptive headway controller related to the unicycle orientation are continuous goal alignment and goal-aligned forward unicycle motion.

\begin{lemma} \label{lem.GoalAlignment}
\emph{(Goal Alignment)}
At any unicycle state $(\pos, \ort) \in \R^2 \times [-\pi, \pi)$ away from the goal position $\goal \!\in\! \R^{2}$ (i.e., $\pos \neq \goal$), the unicycle adaptive headway controller in \refeq{eq.AdaptiveHeadwayControl} adjusts the unicycle orientation towards the goal $\goal \!\in\! \R^{2}$ as 
\begin{align*}
&\frac{\diff}{\diff t}\plist{\ovecTsmall{\ort} \xvectsmall{\pos} } \nonumber 
\\
& \hspace{3mm} \geq  \frac{\rgain}{\hgain} \plist{\!\nvecTsmall{\ort}\! \xvectsmall{\pos}\!}^{\!\!2} \!\plist{\!1 \!-\! \hgain \ovecTsmall{\ort}\! \xvectsmall{\pos} \!\!}  \geq  0
\end{align*}
which is strictly positive when  $\nvecTsmall{\ort}\! \xvectsmall{\pos} \neq 0$.
\end{lemma}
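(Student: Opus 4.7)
The plan is to write the alignment $\ovecTsmall{\ort}\xvectsmall{\pos}$ as an inner product $\tr{\vect{u}}\vect{g}$ of two unit vectors, with $\vect{u}\ldf\ovect{\ort}$ and $\vect{g}\ldf(\goal-\pos)/\norm{\goal-\pos}$, and then differentiate each factor along the closed-loop flow. From $\dot{\ort}=\angvel$ I immediately get $\dot{\vect{u}}=\angvel\,\vect{n}$ with $\vect{n}\ldf\nvect{\ort}$. For $\vect{g}$, applying the quotient rule with $\dot{\pos}=\linvel\vect{u}$ and $\frac{\diff}{\diff t}\norm{\goal-\pos}=-\tr{\vect{g}}\dot{\pos}$ yields $\dot{\vect{g}}=-\frac{\linvel}{\norm{\goal-\pos}}(\mat{I}-\vect{g}\tr{\vect{g}})\vect{u}$, i.e.\ a scaled projection of $-\vect{u}$ onto the line orthogonal to $\vect{g}$.

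Combining the two derivatives through $\frac{\diff}{\diff t}(\tr{\vect{u}}\vect{g})=\tr{\dot{\vect{u}}}\vect{g}+\tr{\vect{u}}\dot{\vect{g}}$, and using $(\tr{\vect{u}}\vect{g})^{2}+(\tr{\vect{n}}\vect{g})^{2}=1$, which holds because $\clist{\vect{u},\vect{n}}$ is an orthonormal basis and $\vect{g}$ is a unit vector, the derivative collapses to
\begin{equation*}
\frac{\diff}{\diff t}\bigl(\tr{\vect{u}}\vect{g}\bigr) \;=\; \angvel\,\tr{\vect{n}}\vect{g} \;-\; \frac{\linvel}{\norm{\goal-\pos}}\bigl(\tr{\vect{n}}\vect{g}\bigr)^{2}.
\end{equation*}
Substituting the adaptive headway controller from \refeq{eq.AdaptiveHeadwayControl} gives $\frac{\rgain}{\hgain}(\tr{\vect{n}}\vect{g})^{2}$ from the $\angvel$-term and $-\frac{\rgain(\tr{\vect{u}}\vect{g}-\hgain)}{1-\hgain\,\tr{\vect{u}}\vect{g}}(\tr{\vect{n}}\vect{g})^{2}$ from the $\linvel$-term. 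Since $\hgain<1$ forces $1-\hgain\,\tr{\vect{u}}\vect{g}\geq 1-\hgain>0$, the common denominator $\hgain(1-\hgain\,\tr{\vect{u}}\vect{g})$ is strictly positive, and a direct calculation reduces the numerator to $\rgain(\tr{\vect{n}}\vect{g})^{2}\bigl(1-2\hgain\,\tr{\vect{u}}\vect{g}+\hgain^{2}\bigr)$.

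The remaining --- and only nontrivial --- step is to recognise the perfect-square decomposition
\begin{equation*}
1 - 2\hgain\,\tr{\vect{u}}\vect{g} + \hgain^{2} \;=\; \bigl(1-\hgain\,\tr{\vect{u}}\vect{g}\bigr)^{2} + \hgain^{2}\bigl(\tr{\vect{n}}\vect{g}\bigr)^{2},
\end{equation*}
which again follows from $(\tr{\vect{u}}\vect{g})^{2}+(\tr{\vect{n}}\vect{g})^{2}=1$. Dropping the nonnegative remainder $\hgain^{2}(\tr{\vect{n}}\vect{g})^{2}$ and cancelling one factor of $1-\hgain\,\tr{\vect{u}}\vect{g}$ against the denominator produces exactly the claimed lower bound $\frac{\rgain}{\hgain}(\tr{\vect{n}}\vect{g})^{2}(1-\hgain\,\tr{\vect{u}}\vect{g})$; its nonnegativity, and strict positivity when $\nvecTsmall{\ort}\xvectsmall{\pos}\neq 0$, follow from $1-\hgain\,\tr{\vect{u}}\vect{g}>0$. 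I expect the main obstacle to be purely bookkeeping: propagating the projection structure in $\dot{\vect{g}}$ without sign errors and spotting the perfect-square identity that upgrades a naive lower bound of zero to the tighter, gain-informed one in the statement.
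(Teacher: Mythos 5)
Your proposal is correct and takes essentially the same route as the paper's proof: both reduce the time derivative of the alignment to $\plist{\nvecTsmall{\ort}\xvectsmall{\pos}}^{2}\plist{\tfrac{\rgain}{\hgain}-\tfrac{\linvel_{\goal}(\pos,\ort)}{\norm{\goal-\pos}}}$, combine over the common denominator $\hgain\plist{1-\hgain\,\ovecTsmall{\ort}\xvectsmall{\pos}}$, and bound the resulting numerator $1-2\hgain\,\ovecTsmall{\ort}\xvectsmall{\pos}+\hgain^{2}$ from below by $\plist{1-\hgain\,\ovecTsmall{\ort}\xvectsmall{\pos}}^{2}$ before cancelling one factor against the denominator. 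The only differences are cosmetic: you derive the opening derivative identity explicitly via the chain rule and the projection form of $\dot{\vect{g}}$, and you name the perfect-square remainder $\hgain^{2}\plist{\nvecTsmall{\ort}\xvectsmall{\pos}}^{2}$, both of which the paper states or uses without elaboration.
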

\begin{proof}
See \refapp{app.GoalAlignment}.
\end{proof}

\begin{lemma} \label{lem.EuclideanDistance2Goal}
\emph{(Goal-Aligned Forward Motion)} 
Starting at $t=0$ from any initial unicycle state $(\pos_0, \ort_0) \!\in \R^2 \!\times\! [-\pi, \pi)$ that is aligned with the goal  $\goal \!\in\! \R^2$ as $\ovecTsmall{\ort_0}\! \xvectsmall{\pos_0} \!>\! \hgain$, the unicycle distance to the goal $\norm{\pos(t) - \goal}$ along the solution trajectory $(\pos(t), \ort(t))$ of the unicycle dynamics in \refeq{eq.UnicycleDynamics} under the adaptive headway controller in \refeq{eq.AdaptiveHeadwayControl} is decreasing over time and the unicycle moves in the forward direction with a positive velocity for all future times $t \geq 0$, i.e.,
\begin{align}
\frac{\diff}{\diff t} \norm{\pos(t) - \goal}^2 \leq 0   \quad \text{and} \quad 
 \linvel_{\goal}(\pos(t), \ort(t)) \geq 0 
\end{align}
where the inequalities are strict for $\pos(t) \neq \goal$.
\end{lemma}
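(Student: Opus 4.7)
The plan is to combine the goal-alignment monotonicity from \reflem{lem.GoalAlignment} with a direct computation of $\frac{\diff}{\diff t}\norm{\pos-\goal}^2$ and the explicit form of $\linvel_{\goal}$ in \refeq{eq.AdaptiveHeadwayControlVelocity}. The key observation is that the quantity $\ovecTsmall{\ort}\xvectsmall{\pos}$ controls the sign of both the linear-velocity input and the time-derivative of the squared distance to the goal, so forward invariance of the set $\clist{(\pos,\ort) : \ovecTsmall{\ort}\xvectsmall{\pos} \geq \hgain \text{ or } \pos = \goal}$ is really all that is needed.

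First I would compute $\frac{\diff}{\diff t}\norm{\pos(t)-\goal}^2 = 2\tr{(\pos-\goal)}\dot{\pos} = -2\linvel_{\goal}(\pos,\ort)\,\norm{\goal-\pos}\,\ovecTsmall{\ort}\xvectsmall{\pos}$, using the unicycle dynamics in \refeq{eq.UnicycleDynamics} and the definition of $\xvectsmall{\pos}$. Next I would inspect the explicit controller \refeq{eq.AdaptiveHeadwayControlVelocity}: since $0 < \hgain < 1$ and $\absval{\ovecTsmall{\ort}\xvectsmall{\pos}} \leq 1$, the denominator $1-\hgain\,\ovecTsmall{\ort}\xvectsmall{\pos}$ is strictly positive everywhere, so the sign of $\linvel_{\goal}$ coincides with the sign of $\ovecTsmall{\ort}\xvectsmall{\pos}-\hgain$.

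The main step is to establish that the alignment condition $\ovecTsmall{\ort(t)}\xvectsmall{\pos(t)} \geq \hgain$ is preserved along the closed-loop trajectory whenever $\pos(t) \neq \goal$. This follows immediately from \reflem{lem.GoalAlignment}, which states that $\frac{\diff}{\diff t}\plist{\ovecTsmall{\ort}\xvectsmall{\pos}} \geq 0$ away from the goal, so that $\ovecTsmall{\ort(t)}\xvectsmall{\pos(t)} \geq \ovecTsmall{\ort_0}\xvectsmall{\pos_0} > \hgain$ for all $t \geq 0$ at which the trajectory has not reached the goal. Combining this with the sign analysis above yields $\linvel_{\goal}(\pos(t),\ort(t)) \geq 0$, and then plugging into the expression for $\frac{\diff}{\diff t}\norm{\pos(t)-\goal}^2$ gives the desired inequality $\frac{\diff}{\diff t}\norm{\pos(t)-\goal}^2 \leq 0$, with both inequalities strict whenever $\pos(t) \neq \goal$ (since then $\ovecTsmall{\ort}\xvectsmall{\pos} > \hgain > 0$ makes the numerator of $\linvel_{\goal}$ strictly positive).

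The only subtlety is handling the arrival at the goal: by the convention in footnote~\ref{fn.fn1} the controller is set to zero at $\pos=\goal$, so if the trajectory reaches the goal at some time $t^*$, both inequalities become trivial equalities for all $t \geq t^*$. The preservation of the alignment condition is really the sole nontrivial ingredient, and that is already supplied by \reflem{lem.GoalAlignment}; everything else is an algebraic check on the sign of the two factors that appear in $\linvel_{\goal}$ and in $\frac{\diff}{\diff t}\norm{\pos-\goal}^2$. I do not anticipate any serious obstacle beyond being careful to state the forward invariance of the alignment set before invoking it.
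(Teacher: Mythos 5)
Your proposal is correct and follows essentially the same route as the paper's own proof: invoke \reflem{lem.GoalAlignment} to show the alignment condition $\ovecTsmall{\ort(t)}\xvectsmall{\pos(t)} > \hgain$ is preserved along the trajectory, deduce $\linvel_{\goal} > 0$ from the sign structure of \refeq{eq.AdaptiveHeadwayControlVelocity}, and conclude via the identity $\frac{\diff}{\diff t}\norm{\pos-\goal}^2 = -2\linvel_{\goal}\,\ovecTsmall{\ort}(\goal-\pos)$. Your additional remarks on the strictly positive denominator and on the convention at $\pos = \goal$ are sound and only make explicit what the paper leaves implicit.
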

\begin{proof}
See \refapp{app.EuclideanDistance2Goal}.
\end{proof}

\section{Unicycle Feedback Motion Prediction \\ for Adaptive Headway Control}
\label{sec.UnicyclePrediction}

In this section, we present two (one circular and one triangular) feedback motion prediction methods, as illustrated in \reffig{fig.motion_prediction_demo}, for bounding the closed-loop motion trajectory of the unicycle robot model under the adaptive headway controller and show that these motion prediction methods asymptotically shrink to the goal point and has a Lipschitz-continuous minimum (e.g., collision) distance to any given (e.g., obstacle) point, which are essential for provably correct and safe robot motion design \cite{isleyen_arslan_RAL2022, arslan_arXiv2022}.

\subsection{Circular Unicycle Feedback Motion Prediction}

One can use the decaying distance of the (extended) unicycle position to the goal (\reflem{lem.ProjectedExtendedDynamics} and \reflem{lem.EuclideanDistance2Goal}) in order to determine the closed-loop unicycle motion range. 

\begin{proposition} \label{prop.CircularUnicycleMotionPrediction}
\emph{(Circular Unicycle Motion Prediction)} Starting at $t = 0$ from any initial state $(\pos_0, \ort_0) \in \R^2 \times [-\pi, \pi)$, the unicycle position trajectory $\pos(t)$ under the adaptive headway controller $\ctrl_{\goal}$ in \refeq{eq.AdaptiveHeadwayControl} towards any given goal $\goal \in \R^{2}$ is contained for all future times $t\geq 0$ in the circular motion prediction set $\motionset_{\ctrl_{\goal}, \ball}(\pos_0, \ort_0)$ that is defined as 
\begin{align}\label{eq.CircularMotionPrediction}
\motionset_{\ctrl_{\goal}, \ball}(\pos_0, \ort_0) \!\ldf\!  \left \{ 
\begin{array}{@{}l@{}l@{}}
\ball\plist{\goal\!, \norm{\pos_0 \!- \!\goal}} & \text{, if } \ovecTsmall{\ort_0}\!\!\! \xvectsmall{\pos_0} \!\geq\! \hgain\\
\ball\plist{\goal\!, \norm{\hext_0 \!-\! \goal}}  & \text{, otherwise}
\end{array}
\right.
\end{align}
where  $\ball(\ctr,\radius) \!\ldf\! \clist{ \vect{z} \!\in\! \R^{2} \big|  \norm{\vect{z} \!-\! \ctr } \leq \radius} $ is the Euclidean closed ball centered at $\vect{c} \!\in\! \R^{2}$ with radius $\radius \!\geq\! 0$, and $\hext$ is the extended unicycle position associated with unicycle state $(\pos, \ort)$  as defined in \refeq{eq.HeadwayExtendedDefinition}.   
\end{proposition}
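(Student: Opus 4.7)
The plan is to follow the case split in the statement, according to the sign of $\ovecTsmall{\ort_0}\xvectsmall{\pos_0} - \hgain$, and in each case deduce a time-independent upper bound on $\norm{\pos(t) - \goal}$ by chaining together the geometric lemmas of \refsec{sec.GeometricPropertiesAHC}.

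For the aligned branch $\ovecTsmall{\ort_0}\xvectsmall{\pos_0} \geq \hgain$, my first step is to show that this inequality is forward-invariant along the closed-loop trajectory. This is immediate from \reflem{lem.GoalAlignment}: the time derivative of $\ovecTsmall{\ort}\xvectsmall{\pos}$ is nonnegative (and strictly positive whenever $\nvecTsmall{\ort}\xvectsmall{\pos} \neq 0$), so $\ovecTsmall{\ort(t)}\xvectsmall{\pos(t)} \geq \hgain$ for all $t \geq 0$ once this inequality holds at $t=0$. The second step is to invoke \reflem{lem.EuclideanDistance2Goal} and conclude that $\norm{\pos(t) - \goal}^2$ is non-increasing, whence $\norm{\pos(t) - \goal} \leq \norm{\pos_0 - \goal}$ and $\pos(t) \in \ball(\goal, \norm{\pos_0 - \goal})$ as claimed.

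For the unaligned branch $\ovecTsmall{\ort_0}\xvectsmall{\pos_0} < \hgain$, the plan is to combine two pointwise-in-time inequalities that do not require any alignment hypothesis. \reflem{lem.DistanceMetricsOrder} yields the instantaneous sandwich $\norm{\pos(t) - \goal} \leq \norm{\hext(t) - \goal}$ along the entire trajectory, and \reflem{lem.ProjectedExtendedDynamics} shows that $\hext(t) \in \blist{\goal, \hext_0}$, so $\norm{\hext(t) - \goal} \leq \norm{\hext_0 - \goal}$ for all $t \geq 0$. Chaining the two gives $\pos(t) \in \ball(\goal, \norm{\hext_0 - \goal})$, which is the second branch of \refeq{eq.CircularMotionPrediction}.

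The only subtlety I expect is the knife-edge case $\ovecTsmall{\ort_0}\xvectsmall{\pos_0} = \hgain$, since \reflem{lem.EuclideanDistance2Goal} is stated with a strict inequality hypothesis. This is easily reconciled: at that instant \refeq{eq.AdaptiveHeadwayControlVelocity} yields $\linvel_{\goal} = 0$ so $\pos$ does not move instantaneously, while $\nvecTsmall{\ort_0}\xvectsmall{\pos_0}^2 = 1 - \hgain^2 > 0$ forces a nonzero angular velocity and, again by \reflem{lem.GoalAlignment}, strict entry into the open set $\clist{\ovecTsmall{\ort}\xvectsmall{\pos} > \hgain}$ for every $t>0$; \reflem{lem.EuclideanDistance2Goal} then takes over and the bound extends to $t = 0$ by continuity of $\norm{\pos(\cdot) - \goal}$. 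Beyond this minor bookkeeping, the proposition reduces to assembling the earlier lemmas, so I do not anticipate a real technical obstacle.
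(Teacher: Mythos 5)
Your proof is correct and takes essentially the same route as the paper's: \reflem{lem.EuclideanDistance2Goal} (with forward invariance of alignment from \reflem{lem.GoalAlignment}) for the aligned branch, and the chain of \reflem{lem.DistanceMetricsOrder} with \reflem{lem.ProjectedExtendedDynamics} for the unaligned branch. Your additional handling of the knife-edge case $\ovecTsmall{\ort_0}\!\xvectsmall{\pos_0} = \hgain$, where \reflem{lem.EuclideanDistance2Goal} is stated only under a strict inequality, is a legitimate refinement that the paper's own proof silently skips, and your resolution via the strict positivity of the alignment rate and continuity is sound.
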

\begin{proof}
If $\ovecTsmall{\ort_0}\! \xvectsmall{\pos_0} \!\geq\! \hgain$, then the unicycle moves forward and its distance to the goal $\norm{\pos(t) - \goal}$ is decreasing along the motion trajectory (\reflem{lem.EuclideanDistance2Goal}).
Otherwise, we have that $\norm{\pos(t) - \goal}$ is bounded above by $\norm{\hext(t) - \goal}$ (\reflem{lem.DistanceMetricsOrder}) which always decreases under the adaptive headway controller (\reflem{lem.ProjectedExtendedDynamics}). 
Thus, the result follows.   
\end{proof} 

\begin{figure}[t]
\centering
\begin{tabular}{@{}c @{\hspace{0.02\columnwidth}} c @{}}
\includegraphics[width = 0.485\columnwidth]{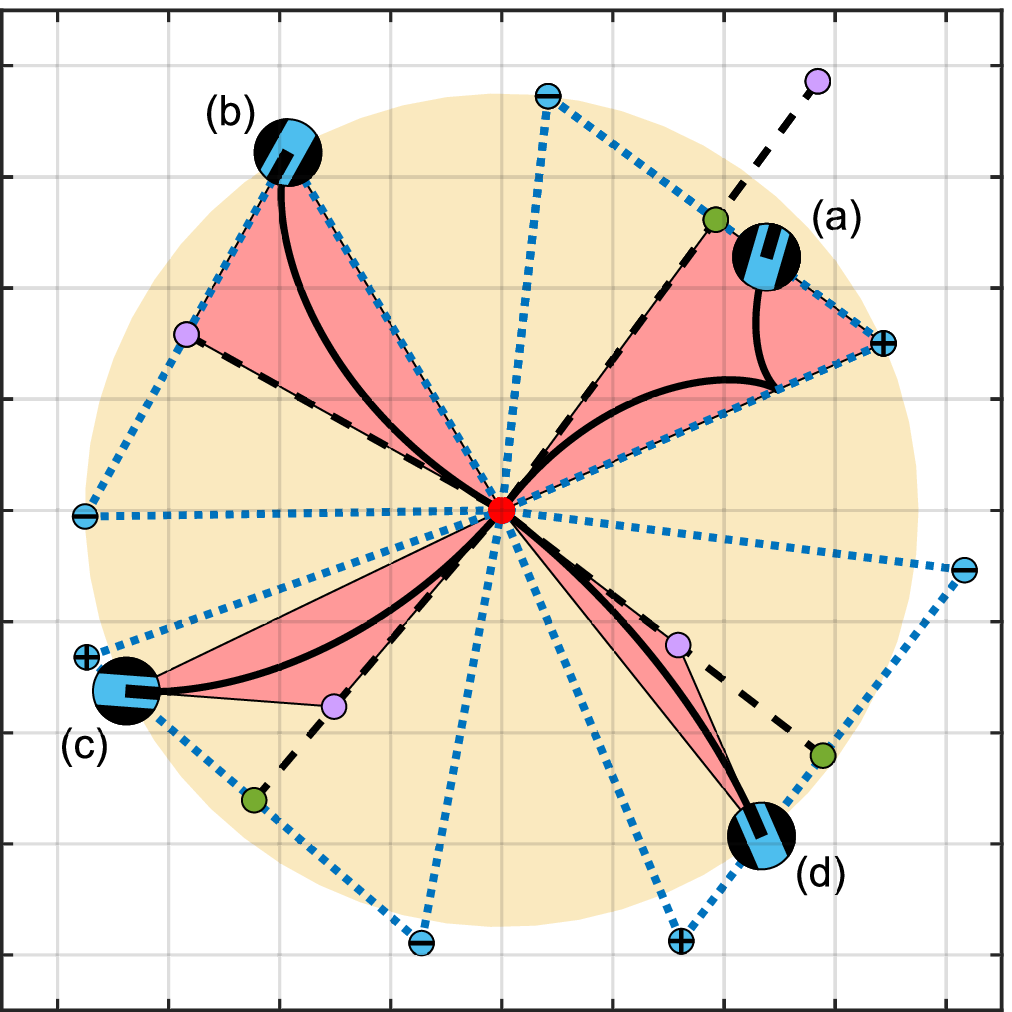} &
\includegraphics[width = 0.485\columnwidth]{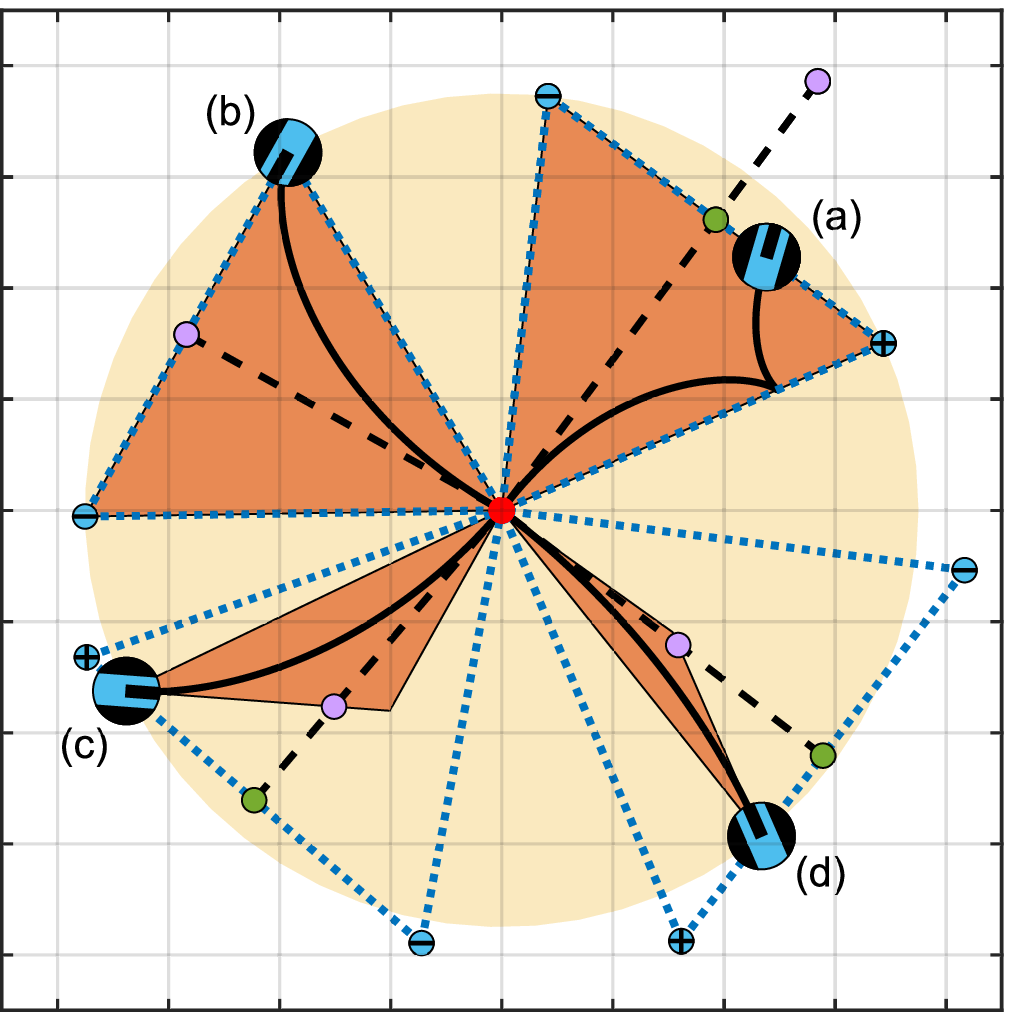} 
\end{tabular}
\caption{Triangular motion bound (left, red) and triangular motion prediction (right, orange) that contain the closed-loop unicycle motion trajectory (solid black line) of the adaptive headway controller towards a given goal (red point), starting from different initial unicycle states that share the same circular motion prediction (yellow).
The triangular motion prediction (orange) is an extension of the triangular motion bound (red) in order to ensure a Lipschitz-continuous distance-to-collision measure.  
(a) Negative initial linear velocity (i.e., backward motion), (b) Zero initial linear velocity (i.e., transition from backward to forward motion), (c, d) Positive initial linear velocity (i.e., forward motion).
}
\label{fig.motion_prediction_demo}
\vspace{-\baselineskip}
\end{figure}

An important property of circular unicycle motion prediction is positive inclusiveness, which ensures that a safety assessment based on the distance of feedback motion prediction set to obstacles is consistent for all future times.
\begin{proposition} \label{prop.PositiveInclusionCircularMotionPrediction}
\emph{(Positive Inclusion of Circular Motion Prediction)}
The circular motion prediction set $\motionset_{\ctrl_{\goal}, \ball}(\pos,\ort)$ of the adaptive headway controller $\ctrl_{\goal}$ towards any given goal position $\goal \in \R^2$ is positively inclusive along the resulting unicycle motion trajectory $(\pos(t), \ort(t))$, i.e.,
\begin{align}
\motionset_{\ctrl_{\goal}, \ball}(\pos(t), \ort(t)) \supseteq \motionset_{\ctrl_{\goal}, \ball}(\pos(t'), \ort(t'))\quad \forall t' \geq t. 
\end{align} 
\end{proposition}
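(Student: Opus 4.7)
The plan is to reduce positive inclusion to a monotonicity statement on the radius of the circular prediction set, since both $\motionset_{\ctrl_{\goal}, \ball}(\pos(t),\ort(t))$ and $\motionset_{\ctrl_{\goal}, \ball}(\pos(t'),\ort(t'))$ are Euclidean balls centered at the common goal $\goal$. Writing $r(t)$ for the radius of $\motionset_{\ctrl_{\goal}, \ball}(\pos(t),\ort(t))$, the containment $\ball(\goal, r(t')) \subseteq \ball(\goal, r(t))$ is equivalent to $r(t') \leq r(t)$, so the whole proposition follows once I verify that $r(\cdot)$ is nonincreasing along the closed-loop unicycle trajectory.

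To show $r$ is nonincreasing I would split by the alignment condition $\ovecTsmall{\ort(\cdot)}\xvectsmall{\pos(\cdot)} \geq \hgain$ that selects between the two branches in \refeq{eq.CircularMotionPrediction}. The key structural fact is that by \reflem{lem.GoalAlignment} the scalar $\ovecTsmall{\ort(t)}\xvectsmall{\pos(t)}$ is monotonically non-decreasing in $t$ (away from the goal), so once the aligned branch is active at time $t$ it stays active at every later $t' \geq t$. This rules out the awkward transition from the aligned to the misaligned branch and leaves three cases to handle.

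If the aligned branch is active throughout $[t, t']$, then $r(\cdot) = \norm{\pos(\cdot) - \goal}$, and \reflem{lem.EuclideanDistance2Goal} directly gives $\norm{\pos(t') - \goal} \leq \norm{\pos(t) - \goal}$. If the misaligned branch is active throughout $[t, t']$, then $r(\cdot) = \norm{\hext(\cdot) - \goal}$, and the monotonicity of the extended-position distance-to-goal in \reflem{lem.ProjectedExtendedDynamics} yields $r(t') \leq r(t)$. The remaining case is a transition from misaligned at time $t$ to aligned at time $t'$; here I would chain \reflem{lem.DistanceMetricsOrder} and \reflem{lem.ProjectedExtendedDynamics} to write $\norm{\pos(t') - \goal} \leq \norm{\hext(t') - \goal} \leq \norm{\hext(t) - \goal}$, which bounds the new radius $r(t')$ by the old radius $r(t)$.

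The main obstacle I anticipate is this transition case, and in particular being careful about the degenerate situation where $\pos(\tau) = \goal$ at some intermediate time $\tau$. In the degenerate case $r$ collapses to zero and containment is immediate, but I still need the monotonicity inequalities from \reflem{lem.ProjectedExtendedDynamics}--\reflem{lem.EuclideanDistance2Goal} to remain valid pointwise, and in particular the monotonicity in \reflem{lem.GoalAlignment} must be interpreted so that once $\pos(\tau) = \goal$ the trajectory stays at the goal, which follows from \refprop{prop.GlobalStability} and the convention $\linvel=\angvel=0$ at the goal from footnote~\ref{fn.fn1}. Once that subtlety is handled, stitching the three cases together completes the argument.
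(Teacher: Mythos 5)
Your proposal is correct and follows essentially the same route as the paper's own (much terser) proof: both reduce positive inclusion to monotonicity of the ball radius and invoke the same three facts --- $\norm{\pos(t)-\goal}\leq\norm{\hext(t)-\goal}$ from \reflem{lem.DistanceMetricsOrder}, the decay of $\norm{\hext(t)-\goal}$ from \reflem{lem.ProjectedExtendedDynamics}, and the persistent decrease of $\norm{\pos(t)-\goal}$ once aligned from \reflem{lem.EuclideanDistance2Goal}. Your explicit three-case split and the use of \reflem{lem.GoalAlignment} to rule out a transition back to the misaligned branch merely spell out what the paper leaves implicit.
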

\begin{proof}
The results follows from the fact that $\norm{\pos(t) - \goal} \leq \norm{\hext(t) - \goal}$ (\reflem{lem.DistanceMetricsOrder}), and  $\norm{\hext(t) - \goal}$ is decreasing along the unicycle motion trajectory (\reflem{lem.ProjectedExtendedDynamics}), and  $\norm{\pos(t) - \goal}$ start persistently decreasing once $\ovecTsmall{\ort(t)}\! \xvectsmall{\pos(t)} \!\geq\! \hgain$ (\reflem{lem.EuclideanDistance2Goal}). 
\end{proof}

\begin{proposition} \label{prop.CircularMotionPredictionRadius}
\emph{(Circular Motion Prediction Radius)}
The circular motion prediction set $\motionset_{\ctrl_{\goal}, \ball}(\pos(t),\ort(t))$ asymptotically shrinks to the goal position $\goal$ along the closed-loop motion trajectory $(\pos(t), \ort(t))$ of the unicycle adaptive headway controller as its radius  
asymptotically decays to zero, i.e.,
\begin{align}
\lim_{t \rightarrow \infty} \min_{\pos'\in \motionset_{\ctrl_{\goal}, \ball}(\pos(t),\ort(t))} \norm{\pos'- \goal} = 0.  
\end{align} 
\end{proposition}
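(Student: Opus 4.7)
The plan is to prove the stronger geometric fact that the radius of the circular prediction set $\motionset_{\ctrl_{\goal}, \ball}(\pos(t),\ort(t))$ decays to zero along the closed-loop trajectory, which both implies the displayed limit (since the set is a ball centered at $\goal$, its minimum-to-$\goal$ distance is zero as soon as the set is nonempty, and its maximum-to-$\goal$ distance is the radius) and matches the informal claim that the set asymptotically shrinks to $\{\goal\}$. The whole argument rests on two ingredients already established earlier in the paper: global convergence of the unicycle position (\refprop{prop.GlobalStability}) and the projection-based identity linking $\norm{\hext-\goal}$ and $\norm{\pos-\goal}$ from \refeq{eq.HeadwayExtendedDefinition}.

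First I would split into the two cases appearing in the definition \refeq{eq.CircularMotionPrediction}. In the forward-aligned case $\ovecTsmall{\ort(t)}\! \xvectsmall{\pos(t)} \geq \hgain$, the radius is literally $\norm{\pos(t)-\goal}$, so Proposition 1 immediately yields decay to zero. In the complementary case, the radius is $\norm{\hext(t)-\goal}$; using the identity $\norm{\hext-\goal} = \frac{1}{\sqrt{1-\hgain^2}}\norm{\hproj-\goal}$ together with the fact that $\hproj$ is the orthogonal projection of $\pos$ onto the line through $\goal$ along $\thead$, which gives $\norm{\hproj-\goal}\leq\norm{\pos-\goal}$, I would obtain the uniform bound
\begin{align*}
\norm{\hext(t)-\goal} \;\leq\; \frac{1}{\sqrt{1-\hgain^2}}\,\norm{\pos(t)-\goal}.
\end{align*}
Combining both cases, the radius of $\motionset_{\ctrl_{\goal}, \ball}(\pos(t),\ort(t))$ is bounded above by $\frac{1}{\sqrt{1-\hgain^2}}\norm{\pos(t)-\goal}$ for every $t\geq 0$.

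Finally I would invoke \refprop{prop.GlobalStability} to conclude $\norm{\pos(t)-\goal}\to 0$ as $t\to\infty$, so the radius vanishes and the set collapses to $\{\goal\}$. In particular, the minimum distance $\min_{\pos'\in \motionset_{\ctrl_{\goal}, \ball}(\pos(t),\ort(t))}\norm{\pos'-\goal}$ tends to $0$ (indeed it is identically $0$, since $\goal$ belongs to the set whenever its radius is nonnegative). There is no real obstacle here; the only point that requires a small amount of care is that the active case of the piecewise definition may switch along the trajectory, which is why the uniform bound $\frac{1}{\sqrt{1-\hgain^2}}\norm{\pos(t)-\goal}$ that dominates both branches simultaneously is the natural vehicle for the argument rather than a case-by-case limit.
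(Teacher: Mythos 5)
Your proof is correct and follows essentially the same route as the paper: reduce the radius in both branches of \refeq{eq.CircularMotionPrediction} to a multiple of $\norm{\pos(t)-\goal}$ and invoke \refprop{prop.GlobalStability}. Your handling of the second branch via the uniform bound $\norm{\hext(t)-\goal}\leq \tfrac{1}{\sqrt{1-\hgain^2}}\norm{\pos(t)-\goal}$ (from \reflem{lem.DistanceMetricsOrder}) is in fact a slightly cleaner justification than the paper's appeal to \reflem{lem.ProjectedExtendedDynamics}, which by itself only gives that $\norm{\hext(t)-\goal}$ is nonincreasing rather than that it vanishes.
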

\begin{proof}
The result follows from the fact that both the robot position $\pos(t)$ and the extended robot position $\hext(t)$ asymptotically converge to the goal $\goal$ (\refprop{prop.GlobalStability} and \reflem{lem.ProjectedExtendedDynamics}), which define the radius of the circular motion prediction set in \refeq{eq.CircularMotionPrediction}.
\end{proof}

\begin{proposition}\label{prop.CircularMotionPredictionDistance}
\emph{(Circular Motion Prediction Distance)}
For any unicycle state $(\pos, \ort) \in \R^{2}\times [\pi, -\pi)$ away from the goal $\goal$, the minimum distance $\min_{\pos'\in \motionset_{\ctrl_{\goal}, \ball}(\pos,\ort)} \norm{\pos'- \vect{z}}$ of the circular motion prediction set $\motionset_{\ctrl_{\goal}, \ball}(\pos,\ort)$ of the adaptive headway controller $\ctrl_{\goal}$ to any given point $\vect{z} \in \R^2$
 is a locally Lipschitz continuous function of  unicycle position $\pos$, unicycle orientation $\ort$ and goal position $\goal$.
\end{proposition}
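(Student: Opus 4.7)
The plan is to decompose the distance function into elementary Lipschitz pieces. For any closed Euclidean ball $\ball(\ctr, \radius) \subset \R^2$, the minimum distance to a fixed point $\vect{z} \in \R^2$ is $\max\plist{0, \norm{\vect{z} - \ctr} - \radius}$, which is jointly $1$-Lipschitz in $\ctr$, $\radius$, and $\vect{z}$. Since the circular motion prediction set in \refeq{eq.CircularMotionPrediction} is always a ball centered at the goal $\goal$, it suffices to prove that its radius function $\radius(\pos, \ort, \goal)$ is locally Lipschitz continuous on $\clist{(\pos, \ort, \goal) \in \R^2 \times [-\pi, \pi) \times \R^2 : \pos \neq \goal}$.

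I would then analyze the two branches of the radius separately. On the closed region $\ovecTsmall{\ort} \xvectsmall{\pos} \geq \hgain$, the radius equals $\norm{\pos - \goal}$, which is globally $1$-Lipschitz in $(\pos, \goal)$ and independent of $\ort$. On the complementary closed region, the radius equals $\norm{\hext - \goal} = \tfrac{1}{\sqrt{1 - \hgain^2}} \norm{\hproj - \goal}$, which using footnote \reffn{fn.ProjectedExtendedDistance} rewrites as
\begin{align*}
\norm{\hext - \goal} = \frac{1}{\sqrt{1 - \hgain^2}} \cdot \frac{\norm{\pos - \goal}^2}{\norm{\hpos - \goal}} \plist{1 - \hgain \ovecTsmall{\ort} \xvectsmall{\pos}}.
\end{align*}
Here $\hpos = \pos + \hgain \norm{\pos - \goal} \ovect{\ort}$ is smooth in $(\pos, \ort, \goal)$, and by \reflem{lem.SimultaneousConvergence} we have $\hpos \neq \goal$ whenever $\pos \neq \goal$, so $\norm{\hpos - \goal}$ stays bounded away from zero on a neighborhood of each admissible state; therefore each factor, and hence the product, is locally Lipschitz in $(\pos, \ort, \goal)$.

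Finally, I would verify continuity across the interface $\ovecTsmall{\ort} \xvectsmall{\pos} = \hgain$. A direct expansion gives $\norm{\hpos - \goal}^2 = \norm{\pos - \goal}^2 (1 - 2\hgain \ovecTsmall{\ort} \xvectsmall{\pos} + \hgain^2)$, which at the interface reduces to $\norm{\pos - \goal}^2 (1 - \hgain^2)$; substituting back into the second-branch formula recovers $\norm{\hext - \goal} = \norm{\pos - \goal}$, exactly matching the first branch. Since each branch is locally Lipschitz on its closed region and the two pieces agree on the common boundary, the piecewise radius is locally Lipschitz on the whole domain by a standard gluing argument, and composition with the ball-to-point distance map concludes the proof. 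The main subtlety is keeping the denominator $\norm{\hpos - \goal}$ bounded away from zero in the second branch, which is exactly what the hypothesis $\pos \neq \goal$ ensures via \reflem{lem.SimultaneousConvergence}; without this hypothesis, the expression would develop a $0/0$ singularity at the goal.
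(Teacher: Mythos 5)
Your proof is correct and follows essentially the same route as the paper's: reduce the problem to the local Lipschitz continuity of the radius, treat the two branches separately, verify that they agree on the interface $\ovecTsmall{\ort}\xvectsmall{\pos} = \hgain$, and glue. You supply more detail than the paper at two points — the explicit rational expression for $\norm{\hext - \goal}$ with the denominator $\norm{\hpos - \goal}$ kept away from zero via \reflem{lem.SimultaneousConvergence}, and a hands-on gluing argument where the paper instead asserts each branch is locally Lipschitz and cites a result on continuous selections of locally Lipschitz functions.
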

\begin{proof}
Due to the circular shape of $\motionset_{\ctrl_{\goal}, \ball}(\pos,\ort)$,  its minimum distance to a point is determined by its center distance $\norm{\goal - \vect{z}}$ and its radius, which is $\norm{\pos - \goal}$ if   $\ovecTsmall{\ort_0}\!\!\! \xvectsmall{\pos_0} \!\geq\! \hgain$; and $\norm{\hext - \goal}$ otherwise. 
Note that for each case, the radius of $\motionset_{\ctrl_{\goal}, \ball}(\pos,\ort)$ is locally Lipschitz continuous with respect to $\pos$, $\ort$, and $\goal$.
Now observe that if $\ovecTsmall{\ort}\!\!\! \xvectsmall{\pos} = \hgain$, then $\hpos = \hproj$ and $\norm{\hproj - \goal} = \norm{\hpos - \goal} = \sqrt{1 - \hgain^2} \norm{\pos - \goal}$. 
Hence, $\norm{\hext - \goal} = \frac{1}{\sqrt{1 - \hgain^2}} \norm{\hproj - \goal} = \norm{\pos - \goal}$.
Hence, the radius of $\motionset_{\ctrl_{\goal}, \ball}(\pos,\ort)$ is a continuous selection of locally Lipschitz continuous functions which is also locally Lipschitz \cite{liu_JCO1995}. 
Thus, the result follows.
\end{proof}

\subsection{Triangular Motion Range Prediction}

Although it has a simple analytical form, the circular unicycle motion prediction $\motionset_{\ctrl_{\goal}, \mathrm{B}}(\pos, \ort)$ is conservative in describing the closed-loop unicycle motion due to its symmetric form as illustrated in \reffig{fig.motion_prediction_demo}.
To capture unicycle motion direction more accurately, we introduce a triangular unicycle motion prediction that contains the closed-loop unicycle motion trajectory under adaptive headway control.

\begin{lemma} \label{prop.TriangularUnicycleMotionBound}
\emph{(Triangular Unicycle Motion Bound)}
Starting at $t = 0$ from any initial unicycle pose $(\pos(0), \ort(0)) \in \R^2 \times [-\pi, \pi)$, the unicycle position trajectory $\pos(t)$ under the adaptive headway control $\ctrl_{\goal}(\linvel, \angvel)$ in \refeq{eq.AdaptiveHeadwayControl} towards a given goal $\goal \in \R^{2}$ is contained for all future times in a triangular set as 
\begin{align*}
\pos(t) \in
\left \{ 
\begin{array}{@{}l@{\,}l@{}}
\conv (\goal,\! \pos(0), \hpos(0) ) & \text{, if } \ovecTsmall{\ort(0)} \!\! \xvectsmall{\pos(0)} \geq \hgain\\
\conv (\goal, \hproj(0), \hext(0)) & \text{, otherwise}
\end{array}
\right.
\end{align*}
where one has $\conv (\goal,\! \pos(0), \hpos(0)) \subseteq \conv (\goal, \hproj(0), \hext(0))$ if $\ovecTsmall{\ort(0)} \!\! \xvectsmall{\pos(0)} \geq \hgain$; and also $\hext = \pos$ and $\hproj = \hpos$ if  $\ovecTsmall{\ort} \!\! \xvectsmall{\pos} = \hgain$.
Here, $\conv$ denotes the convex hull operator and $\hpos(0)$ is the headway point defined in \refeq{eq.HeadwayPoint},  $\hproj(0)$ and  $\hext(0)$ are the initial projected and extended unicycle positions defined in \refeq{eq.HeadwayProjectedDefinition} and \refeq{eq.HeadwayExtendedDefinition}, respectively.
\end{lemma}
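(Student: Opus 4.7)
The plan is to exploit the common exponential contraction of $\hproj$ and $\hext$ (\reflem{lem.ProjectedExtendedDynamics}) together with the sandwich $\pos(t)\in[\hproj(t),\hext(t)]$ (\reflem{lem.RobotbtwProjectedandExtended}) to obtain an explicit convex-combination parametrization of the closed-loop trajectory, and then to tighten this bound in Case~1.

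First I would establish the general bound $\pos(t)\in T_2 := \conv(\goal,\hproj(0),\hext(0))$. By \reflem{lem.RobotbtwProjectedandExtended}, write $\pos(t)=(1-\alpha(t))\hproj(t)+\alpha(t)\hext(t)$ with $\alpha(t)\in[0,1]$. Since both projected and extended positions evolve as $\dot{\bullet}=-\kappa(t)(\bullet-\goal)$ with the \emph{same} time-varying rate $\kappa(t)$ (\reflem{lem.ProjectedExtendedDynamics}), letting $\Lambda(t)=\int_0^t\kappa(\tau)\,d\tau\geq 0$ gives $\hproj(t)-\goal=e^{-\Lambda(t)}(\hproj(0)-\goal)$ and likewise for $\hext$. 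Substituting exhibits $\pos(t)$ as a convex combination of $\goal$, $\hproj(0)$, and $\hext(0)$ with nonnegative coefficients $\bigl(1-e^{-\Lambda(t)},\, e^{-\Lambda(t)}(1-\alpha(t)),\, e^{-\Lambda(t)}\alpha(t)\bigr)$, placing $\pos(t)\in T_2$. The auxiliary inclusion $T_1:=\conv(\goal,\pos(0),\hpos(0))\subseteq T_2$ in Case~1 follows from a short algebraic check using $\hpos=\pos+\hgain\norm{\pos-\goal}\ovect{\ort}$: one verifies that $\norm{\hpos(0)-\goal}\leq\norm{\hproj(0)-\goal}$ is equivalent to $\ovecTsmall{\ort(0)}\xvectsmall{\pos(0)}\geq\hgain$, placing $\hpos(0)\in[\goal,\hproj(0)]\subset T_2$; combined with $\pos(0)\in[\hproj(0),\hext(0)]\subset T_2$ from \reflem{lem.RobotbtwProjectedandExtended}, all vertices of $T_1$ lie in $T_2$. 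The boundary identity at $\ovecTsmall{\ort}\xvectsmall{\pos}=\hgain$ forces $\ovect{\ort}\perp\thead$, from which $\hpos=\hproj$ and (via $\norm{\hproj-\goal}=\sqrt{1-\hgain^2}\norm{\pos-\goal}$) also $\pos=\hext$.

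The sharper Case~1 confinement $\pos(t)\in T_1$ is the main obstacle. Persistent goal alignment (\reflem{lem.GoalAlignment}) ensures $\ovecTsmall{\ort(t)}\xvectsmall{\pos(t)}\geq\hgain$ for all $t\geq 0$. Working in the fixed $(-\thead,\nhead)$ frame with $\goal$ at the origin, $\pos(t)-\goal$ has longitudinal coordinate $p_t(t)=e^{-\Lambda(t)}p_t(0)$ (monotone decreasing) and lateral coordinate $p_n(t)=\alpha(t)\tfrac{\hgain}{\sqrt{1-\hgain^2}}p_t(t)$ with $\alpha(t)\in[0,1]$; in these coordinates $T_1$ is the triangle with vertices at the origin, $(p_t(0),p_n(0))$, and $(\norm{\hpos(0)-\goal},0)$. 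The edge $p_n=0$ is guarded by \reflem{lem.RobotbtwProjectedandExtended}, and the far edge $[\pos(0),\hpos(0)]$ is inaccessible because $\norm{\pos(t)-\goal}$ is non-increasing (\reflem{lem.EuclideanDistance2Goal}). The remaining edge requires showing $\alpha(t)\leq\alpha(0)$, which I would prove by differentiating $\alpha(t)$ along the closed-loop vector field and combining the strict goal-alignment inequality of \reflem{lem.GoalAlignment} with the explicit form of $(\linvel_\goal,\angvel_\goal)$ in \refeq{eq.AdaptiveHeadwayControl}; the technical heart is verifying that the unicycle reorients toward $-\thead$ at least as fast as the longitudinal contraction advances, which is precisely the content of persistent goal alignment in Case~1.
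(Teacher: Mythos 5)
Your first half is sound and is essentially the paper's own route for the second case: the sandwich $\pos(t)\in[\hproj(t),\hext(t)]$ from \reflem{lem.RobotbtwProjectedandExtended} together with the contraction of $\hproj$ and $\hext$ along fixed rays through $\goal$ (\reflem{lem.ProjectedExtendedDynamics}) gives $\pos(t)\in\conv(\goal,\hproj(0),\hext(0))$, and your computations showing $\hpos(0)\in[\goal,\hproj(0)]$ exactly when $\ovecTsmall{\ort(0)}\xvectsmall{\pos(0)}\geq\hgain$, hence $\conv(\goal,\pos(0),\hpos(0))\subseteq\conv(\goal,\hproj(0),\hext(0))$, and the boundary identity $\hpos=\hproj$, $\pos=\hext$ at $\ovecTsmall{\ort}\xvectsmall{\pos}=\hgain$, all check out.

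The gap is in the Case~1 confinement to $T_1=\conv(\goal,\pos(0),\hpos(0))$, specifically your treatment of the far edge $[\pos(0),\hpos(0)]$. Monotonicity of $\norm{\pos(t)-\goal}$ only confines the trajectory to the disk $\ball(\goal,\norm{\pos(0)-\goal})$, and every interior point of the far edge lies strictly inside that disk (in Case~1 one has $\norm{\hpos(0)-\goal}<\norm{\pos(0)-\goal}$), so your three guards (nonnegative lateral coordinate, nonincreasing angle, nonincreasing distance) only carve out a circular sector that strictly contains $T_1$: for instance $\hproj(0)$ satisfies all three constraints yet lies outside $T_1$ whenever the initial alignment is strict and $\pos(0)$ is off the headway line. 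Your remaining-edge claim $\alpha(t)\leq\alpha(0)$ is true but is deferred rather than proved. The single observation that repairs both edges, and is the heart of the paper's argument, is that in Case~1 the alignment condition persists (\reflem{lem.GoalAlignment}) so $\linvel_{\goal}\geq 0$ for all $t$ (\reflem{lem.EuclideanDistance2Goal}), whence by \refeq{eq.HeadwayPoint} the closed-loop velocity satisfies $\dot{\pos}=\tfrac{\linvel_{\goal}}{\hdist}(\hpos-\pos)$ with a nonnegative coefficient: the unicycle always moves toward the headway point $\hpos(t)$, which by the reference dynamics \refeq{eq.ReferenceDynamics} remains on $[\goal,\hpos(0)]\subseteq T_1$. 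Since $T_1$ is convex, the vector field is sub-tangential on $\partial T_1$, so $T_1$ is positively invariant; this disposes of all three edges at once and yields your $\alpha$-monotonicity as a byproduct. Without this (or an equivalent inward-pointing argument on the far edge), the Case~1 bound does not follow from the constraints you invoke.
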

\begin{proof}
We have from  \reflem{lem.RobotbtwProjectedandExtended} and  \reflem{lem.ProjectedExtendedDynamics} that $\pos(t) \in [\hproj(t), \hext(t)]$ where $\hproj(t) \in [\goal, \hproj(0)]$ and $\hext(t) \in [\goal, \hext(0)]$  for all $t \geq 0$. 
Hence, using the convex combination of the boundary points, one can bound the unicycle position trajectory as $\pos(t) \in \conv(\goal, \hproj(0), \hext(0))$ for all $t \geq 0$. 

If $\ovecTsmall{\ort(0)} \xvectsmall{\pos(0)} \geq \hgain$, then it holds for all future times, i.e. $\ovecTsmall{\ort(t)} \xvectsmall{\pos(t)} \geq \hgain$ (\reflem{lem.GoalAlignment}) and the unicycle robot always moves with a nonnegative linear velocity towards the headway point (\reflem{lem.EuclideanDistance2Goal}). 
Note that the headway point satisfies $\hpos(t) \in [\goal, \hpos(0)]$ due to the reference headway dynamics in \refeq{eq.ReferenceDynamics}.
Hence, the unicycle position trajectory can be bounded as $\pos(t) \in \conv(\goal, \pos(0), \hpos(0))$ since the unicycle velocity $\dot{\pos}(t)$ always points towards the headway point $\hpos(t) \in \conv(\goal, \pos(0), \hpos(0))$  which leaves  $\conv(\goal, \pos(0), \hpos(0))$ positively invariant due to the sub-tangentiality property on the set boundary \cite{blanchini_Automatica1999}.  

Finally, by definitions \refeq{eq.HeadwayPoint}, \refeq{eq.HeadwayProjectedDefinition}, \refeq{eq.HeadwayExtendedDefinition}, we have $\pos \in [\hproj, \hext]$ and $\hpos \in [\goal, \hproj]$ if $\ovecTsmall{\ort(0)} \xvectsmall{\pos(0)} \geq \hgain$; and $\hext = \pos$ and $\hproj = \hpos$ if  $\ovecTsmall{\ort} \!\! \xvectsmall{\pos} = \hgain$. 
Therefore, $\conv(\goal, \pos, \hpos) \subseteq \conv(\goal, \hproj, \hext)$ if $\ovecTsmall{\ort} \xvectsmall{\pos} \geq \hgain$, where the equalities are tight.
\end{proof}

The triangular bound on the unicycle position trajectory in \refprop{prop.TriangularUnicycleMotionBound} changes discontinuously for the goal positions that are placed almost perfectly behind the unicycle( i.e., $\ovecTsmall{\ort}(\goal - \pos) \approx -1$).  
To overcome this discontinuity issue, we construct a triangular motion prediction set, denoted by $\motionset_{\ctrl_{\goal}, \tri}(\pos, \ort)$ for the adaptive headway controller $\ctrl_{\goal}$ as 
\begin{align}\label{eq.TriangularMotionPrediction}
\motionset_{\ctrl_{\goal}, \tri}(\pos, \ort) \!\ldf\! 
\left \{
\begin{array}{@{}l@{}l@{}}
\conv (\goal, \pos, \hhat ) & \text{, if } \ovecTsmall{\ort} \!\!\xvectsmall{\pos} \! \geq\! \hgain
\\
\conv (\goal, \hext_{+}, \hext_{-}) & \text{, otherwise}
\end{array}
\right.
\end{align}
where the triangle vertices are defined for $\pos \neq \goal$, using the headway point $\hpos$ in \refeq{eq.HeadwayPoint}, the headway tangent $\thead$ in \refeq{eq.HeadwayTangent}, and the projected unicycle position $\hproj$ in \refeq{eq.HeadwayProjectedDefinition},  as 
\begin{align}
&\hhat \ldf \hpos + \frac{1 - \ovecTsmall{\ort} \xvectsmall{\pos} }{1 - \hgain}  \hdist \ovectsmall{\ort} \\
&\hext_{+} \ldf \hproj + \frac{\hgain}{\sqrt{1-\hgain}} \norm{\hproj - \goal} \rotmat_{\frac{\pi}{2}} \thead \\
&\hext_{-} \ldf \hproj - \frac{\hgain}{\sqrt{1-\hgain}} \norm{\hproj - \goal} \rotmat_{\frac{\pi}{2}} \thead 
\end{align}
which are all set equal to $\goal$ for $\pos = \goal$. 
Note that the triangular motion prediction set changes continuously because  $\conv (\goal, \pos, \hhat ) \! = \!\conv (\goal, \hext_{+}, \hext_{-})$ when   $\ovecTsmall{\ort}\!\!\!\xvectsmall{\pos} \! =\! \hgain$.
It is also important to observe that $\hpos \! \in\! [\pos, \hhat]$ and $\hproj \!= \!\!\frac{\hext_{+} + \hext_{-}}{2}$, and the extended unicycle position $\hext$ in \refeq{eq.HeadwayExtendedDefinition} is equal to either $\hext_{+}$ or $\hext_{-}$. 
Hence, the triangular motion prediction $\motionset_{\ctrl_{\goal}, \tri}(\pos(0), \ort(0))$  is a superset of the triangular bound on the unicycle position trajectory $\pos(t)$ in \refprop{prop.TriangularUnicycleMotionBound}, i.e., $\pos(t) \in \motionset_{\ctrl_{\goal}, \tri}(\pos(0), \ort(0))$ for all $t\geq 0$.

\begin{proposition}\label{prop.TriangularMotionPredictionRadius}
\emph{(Triangular Motion Prediction Radius)}
The triangular motion prediction set $\motionset_{\ctrl_{\goal}, \tri}(\pos(t), \ort(t))$ of the adaptive headway controller $\ctrl_{\goal}$ in \refeq{eq.AdaptiveHeadwayControl} asymptotically shrinks to the goal point along the resulting unicycle motion trajectory $(\pos(t), \ort(t))$ as its radius with respect to the goal asymptotically decays to zero, i.e.,
\begin{align}
\lim _{t \rightarrow \infty} \min_{\pos ' \in \motionset_{\ctrl_{\goal}, \tri}(\pos(t), \ort(t))} \norm{\pos'- \goal} = 0 
\end{align} 
\end{proposition}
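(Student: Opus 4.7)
The plan is to show that the triangular set $\motionset_{\ctrl_{\goal}, \tri}(\pos(t), \ort(t))$ collapses to $\{\goal\}$ in the limit by proving that all three of its vertices converge to $\goal$ as $t \to \infty$. Because $\goal$ is itself a vertex of the set in both cases of the definition in \refeq{eq.TriangularMotionPrediction}, it suffices to track the other two vertices; once those converge to $\goal$, the convex hull of the three points shrinks to $\{\goal\}$, which immediately yields the claimed limit (and in fact the stronger Hausdorff-distance analogue).

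As a preparatory step, I would invoke \refprop{prop.GlobalStability} to obtain $\pos(t) \to \goal$, which immediately gives $\hdist(t) = \hgain \norm{\pos(t) - \goal} \to 0$ and $\hpos(t) = \pos(t) + \hdist(t)\, \ovectsmall{\ort(t)} \to \goal$. With these baseline convergences in hand, I would split the analysis according to the two cases in the definition of $\motionset_{\ctrl_{\goal}, \tri}$.

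In the forward-motion case $\ovecTsmall{\ort(t)}\!\xvectsmall{\pos(t)} \geq \hgain$, the non-$\goal$ vertices are $\pos(t)$ and $\hhat(t)$. The displacement $\hhat - \hpos = \frac{1 - \ovecTsmall{\ort}\xvectsmall{\pos}}{1-\hgain}\, \hdist\, \ovectsmall{\ort}$ has a scalar coefficient bounded above by $\frac{2}{1-\hgain}$ because $\absval{\ovecTsmall{\ort}\xvectsmall{\pos}} \leq 1$, so $\norm{\hhat - \hpos} \leq \frac{2}{1-\hgain}\hdist \to 0$ and hence $\hhat(t) \to \goal$. In the complementary case $\ovecTsmall{\ort(t)}\!\xvectsmall{\pos(t)} < \hgain$, the non-$\goal$ vertices are $\hext_{+}$ and $\hext_{-}$; by \reflem{lem.DistanceMetricsOrder}, $\norm{\hproj - \goal} \leq \norm{\pos - \goal} \to 0$, so $\hproj(t) \to \goal$, and since the offsets $\hext_{\pm} - \hproj = \pm \frac{\hgain}{\sqrt{1-\hgain}}\norm{\hproj - \goal}\, \rotmat_{\frac{\pi}{2}} \thead$ have magnitudes proportional to $\norm{\hproj - \goal}$, they vanish as well, giving $\hext_{\pm}(t) \to \goal$.

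Combining the two cases, every vertex of $\motionset_{\ctrl_{\goal}, \tri}(\pos(t), \ort(t))$ converges to $\goal$ along the closed-loop trajectory, so the triangular region shrinks to the singleton $\{\goal\}$ and the claimed limit follows. I do not expect any serious obstacle; the main subtlety is that the active case can switch during evolution, but by \reflem{lem.GoalAlignment} the quantity $\ovecTsmall{\ort}\xvectsmall{\pos}$ is nondecreasing, so the transition (from the non-forward to the forward case) occurs at most once, and the vertex-level bounds in the two cases hold uniformly in that bookkeeping.
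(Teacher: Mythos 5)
Your proof is correct and takes essentially the same route as the paper, whose entire argument is the one-line observation that all vertices of the triangular prediction set converge to $\goal$; you simply supply the vertex-by-vertex estimates (via \refprop{prop.GlobalStability}, the bound $\norm{\hhat - \hpos} \le \tfrac{2}{1-\hgain}\hdist$, and \reflem{lem.DistanceMetricsOrder}) that the paper leaves implicit. Your side remarks that $\goal$ is itself a vertex (so the stated minimum is identically zero and the real content is Hausdorff shrinkage) and that the case switch happens at most once by \reflem{lem.GoalAlignment} are sound but do not alter the underlying argument.
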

\begin{proof}
The result follows from the fact that the vertices points of the triangular motion prediction $\motionset_{\ctrl_{\goal}, \tri}(\pos(t), \ort(t))$ asymptotically converge to the goal $\goal$. 
\end{proof}  

\begin{proposition}\label{prop.TriangularMotionPredictionDistance}
\emph{(Triangular Motion Prediction Distance)}
For any unicycle state $(\pos, \ort) \in \R^2 \times [-\pi, \pi)$ away from the goal $\goal$, the minimum  distance  $\min_{\pos'\in \motionset_{\ctrl_{\goal}, \tri}(\pos, \ort)} \norm{\pos'- \vect{z}}$ of the triangular motion prediction set $\motionset_{\ctrl_{\goal}, \tri}(\pos, \ort)$ of the adaptive headway controller $\ctrl_{\goal}$ to any given point $\vect{z} \in \R^2$ is a locally Lipschitz continuous function of the  unicycle position $\pos$, the unicycle orientation $\ort$, and the goal position $\goal$.  
\end{proposition}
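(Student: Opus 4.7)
The plan is to mirror the strategy used for \refprop{prop.CircularMotionPredictionDistance}: write the point-to-set distance as a piecewise function on the two open regions $\{\ovecTsmall{\ort}\xvectsmall{\pos} > \hgain\}$ and $\{\ovecTsmall{\ort}\xvectsmall{\pos} < \hgain\}$, establish local Lipschitz continuity on each piece, verify that the pieces agree on the common boundary, and conclude via the continuous-selection result of \cite{liu_JCO1995}.

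First, I would invoke the standard fact that the distance from a fixed point $\vect{z}$ to a triangle $\conv(\vect{a}, \vect{b}, \vect{c})$ is jointly Lipschitz in the vertices: a perturbation of each vertex by at most $\varepsilon$ moves every convex combination by at most $\varepsilon$, so the Hausdorff distance between the two triangles is bounded by $\varepsilon$, and the point-to-set distance is $1$-Lipschitz in Hausdorff distance. It therefore suffices to show that each vertex of $\motionset_{\ctrl_{\goal}, \tri}(\pos, \ort)$ is a locally Lipschitz function of $(\pos, \ort, \goal)$ on its respective open region, whenever $\pos \neq \goal$.

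On $\{\ovecTsmall{\ort}\xvectsmall{\pos} > \hgain\}$ the vertices are $\goal$, $\pos$, and $\hhat = \hpos + \frac{1 - \ovecTsmall{\ort}\xvectsmall{\pos}}{1-\hgain}\hdist\,\ovectsmall{\ort}$. The scalars $\hdist = \hgain\norm{\pos-\goal}$ and $\ovecTsmall{\ort}\xvectsmall{\pos}$, together with $\hpos$ itself, are locally Lipschitz in $(\pos, \ort, \goal)$ because $\norm{\pos-\goal}$ is locally bounded away from zero and products, sums, and smooth compositions of locally Lipschitz maps are locally Lipschitz. On $\{\ovecTsmall{\ort}\xvectsmall{\pos} < \hgain\}$ the vertices $\goal$, $\hext_{+}$, $\hext_{-}$ are built from $\thead = (\goal - \hpos)/\norm{\goal - \hpos}$ and $\hproj = \goal + \thead\tr{\thead}(\pos - \goal)$. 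Here I would invoke \reflem{lem.SimultaneousConvergence} to guarantee $\hpos \neq \goal$ whenever $\pos \neq \goal$, so the denominator in $\thead$ stays locally bounded away from zero; consequently $\thead$, the rank-one projector $\thead\tr{\thead}$, the scalar $\norm{\hproj-\goal}$, and hence $\hext_{\pm}$ are all locally Lipschitz in $(\pos, \ort, \goal)$.

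Finally, continuity of the selection across the switching surface $\{\ovecTsmall{\ort}\xvectsmall{\pos} = \hgain\}$ follows from the observation already recorded after \refeq{eq.TriangularMotionPrediction} that $\conv(\goal, \pos, \hhat) = \conv(\goal, \hext_{+}, \hext_{-})$ on this locus, so the two branches define the same set and hence the same point-to-set distance. The piecewise distance is then a continuous selection of two locally Lipschitz functions, which is itself locally Lipschitz by \cite{liu_JCO1995}. The main obstacle is the second branch, where one must verify that the orthonormal frame $\{\thead, \rotmat_{\pi/2}\thead\}$ and the projection $\hproj$ vary Lipschitz-continuously; this hinges essentially on \reflem{lem.SimultaneousConvergence}, and once $\hpos$ is known to be locally bounded away from $\goal$ the remaining bookkeeping is routine via the product and quotient rules for locally Lipschitz maps.
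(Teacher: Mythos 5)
Your proof is correct and lands on the same reduction as the paper's---local Lipschitz continuity of the set distance follows from local Lipschitz continuity of the triangle vertices in $(\pos,\ort,\goal)$---but the supporting machinery is genuinely different. The paper notes that the vertices are smooth away from the goal, regards $\motionset_{\ctrl_{\goal},\tri}(\pos,\ort)$ as a smoothly parametrized affine image of a fixed triangle, and invokes Lemma~1 of \cite{isleyen_arslan_RAL2022} for Lipschitz continuity of set distances under smooth affine transformations. You replace that external lemma with an elementary two-step bound: the Hausdorff distance between two triangles is at most the largest vertex perturbation, and the point-to-set distance is $1$-Lipschitz with respect to the Hausdorff metric; you then glue the two branches with the continuous-selection result of \cite{liu_JCO1995}, exactly as the paper does for the circular case in \refprop{prop.CircularMotionPredictionDistance}. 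Your version is more self-contained and also more explicit on two points the paper leaves implicit in this proof: that the branches coincide on the switching locus $\ovecTsmall{\ort}\xvectsmall{\pos}=\hgain$ (the paper records this only in the discussion after \refeq{eq.TriangularMotionPrediction}, not in the proof itself), and that \reflem{lem.SimultaneousConvergence} is what keeps $\norm{\goal-\hpos}$ bounded away from zero so that $\thead$, $\hproj$, and $\hext_{+}$, $\hext_{-}$ are well defined and locally Lipschitz away from the goal. Both routes are sound; yours trades the citation of prior work for slightly longer but fully verifiable bookkeeping.
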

\begin{proof}
Away from the goal position, the vertex points of the triangular motion prediction set $\motionset_{\ctrl_{\goal}, \tri}(\pos, \ort)$ are smooth functions of $\pos$, $\ort$, and $\goal$ for both the case of $\ovecTsmall{\ort} \!\! \xvectsmall{\pos} \geq \hgain$ and otherwise.
Hence, the triangular feedback motion prediction set $\motionset_{\ctrl_{\goal}, \tri}(\pos, \ort)$ can be expressed as an affine transformation of a fixed triangle that is a smooth function of $\pos$, $\ort$, and $\goal$. Therefore, the distance of $\motionset_{\ctrl_{\goal}, \tri}(\pos, \ort)$ is locally Lipschitz continuous with respect to $\pos$, $\ort$, and $\goal$ since the minimum set-distance is  Lipschitz continuous under smooth affine transformations (see Lemma 1 in \cite{isleyen_arslan_RAL2022}).   
\end{proof}

\section{Application: Safe Unicycle Path Following \\ {\small via Adaptive Headway Control and Motion Prediction}}
\label{sec.SafeNavigation}

In this section, we demonstrate an application of the adaptive headway control in \refeq{eq.AdaptiveHeadwayControl} and the associated circular and triangular feedback motion predictions in \refeq{eq.CircularMotionPrediction} and \refeq{eq.TriangularMotionPrediction} for safe unicycle path following around obstacles using a time governor \cite{arslan_arXiv2022}.
In brief, a time governor performs an online time parametrization of a reference path for provably correct and safe path following based on the safety assessment of the predicted robot motion \cite{arslan_arXiv2022}, which requires an asymptotically shrinking motion prediction (see \refprop{prop.CircularMotionPredictionRadius} and \refprop{prop.TriangularMotionPredictionRadius}) with Lipschitz-continuous minimum distance to any given (e.g., obstacle) point (see \refprop{prop.CircularMotionPredictionDistance} and \refprop{prop.TriangularMotionPredictionDistance}).

\subsection{Safe Unicycle Path Following via Time Governors}

For ease of exposition, we consider a disk-shaped unicycle robot of body radius $\radius>0$, centered at position $\pos \in \workspace$ with orientation $\ort \in [ -\pi, \pi )$, that operates in a known static compact environment $\workspace \subseteq \R^{2}$ which is cluttered with a collection of obstacles represented by an open set $\obstspace \subset \R^{2}$. 
Hence, the robot's free space, denoted by $\freespace$, of collision-free unicycle positions is given by
\begin{align} \label{eq.FreeSpace}
\freespace \ldf \clist{ \pos \in \workspace \, \big| \,   \ball(\pos,\radius) \subseteq \workspace \setminus \obstspace },
\end{align}
where $\ball(\pos,\radius) \ldf \clist{ \vect{y} \in \R^{\dimspace} \big|  \norm{\vect{y} - \pos } \leq \radius} $ is the Euclidean closed ball centered at $\pos$ with radius $\radius$, and  $\norm{.}$ denotes the standard Euclidean norm.
Suppose  $\path(\pathparam) \!:\! [\smin, \smax] \!\rightarrow\! \freespace$ be a Lipschitz-continuous collision-free reference path that joins  a pair of collision-free start and goal positions $\startpos, \goalpos \!\in\! \freespace$ such that $\path(\smin) \!=\! \startpos, \path(\smax) \!=\! \goalpos$.
Starting at $t = 0$ from the initial path parameter $\pathparam(0) = \smin$, the initial unicycle position $\pos(0) = \startpos$  and some initial unicycle orientation $\ort(0) \in [-\pi, \pi)$, we construct a safe unicycle path following controller with online path time-parametrization, based on the adaptive headway control $\ctrl_{\path(\pathparam)}$ in \refeq{eq.AdaptiveHeadwayControl} towards the path point $\path(\pathparam)$, as
\begin{subequations}\label{eq.SafePathFollowing}
\begin{align}
\dot{\pathparam} &=  \min \plist{ \gain_\safelevel \safedist_\freespace \plist{\motionset_{\ctrl_{\path(\pathparam)\!}}\!(\pos, \ort)} , \!-\gain_\pathparam (\pathparam \!-\! \smax) }\!\!
\\
\dot{\pos} &= \linvel_{\path(s)}(\pos, \ort) 
\\
\dot{\ort} &= \angvel_{\path(s)}(\pos, \ort)
\end{align}
\end{subequations}
where  $\gain_\safelevel, \gain_\pathparam > 0$ are fixed positive control coefficients, and the safety of the unicycle motion is measured by the minimum distance between a (e.g., circular or triangular) feedback motion prediction set $\motionset_{\ctrl_{\path(\pathparam)}}(\pos, \ort)$ of the adaptive headway controller $\ctrl_{\path(\pathparam)}$ and the free space boundary $\partial \freespace$  as
\begin{align}\label{eq.safedist}
\safedist_{\freespace} (\motionset_{\ctrl_{\path(\pathparam)}}\!(\pos, \ort) \!) 
& \!\ldf\! \!\left \{ \begin{array}{@{}l@{\,}l@{}}
\min\limits_{\substack{\vect{a} \in \motionset_{\ctrl_{\path(\pathparam)}}\!\!(\pos, \ort)\\ \vect{b} \in \partial \freespace} }\!\!\!\!\! \!\!\!\!\norm{\vect{a} \!-\! \vect{b}} & \text{, if } \motionset_{\ctrl_{\path(\pathparam)}} \!(\pos, \ort) \!\subseteq\! \freespace \\
0 & \text{, otherwise.}
\end{array}
\right.
\end{align}    
In summary, based on the safety level of the predicted unicycle robot motion, the path parameter $\pathparam$ is continuously increased in \refeq{eq.SafePathFollowing} while the unicycle robot under adaptive headway control $\ctrl_{\path(\pathparam)}$ moves towards the reference path point $\path(\pathparam)$ which acts as a local goal.
If the reference path $\path$ has a nonzero clearance from the free space boundary $\partial \freespace$, an asymptotically shrinking feedback motion prediction with a Lipschitz continuous safety distance \refeq{eq.safedist} ensures that the path parameter trajectory $\pathparam(t)$ and the unicycle position trajectory $\pos(t)$ under the time-governed path following dynamics in \refeq{eq.SafePathFollowing} asymptotically converge to the end of the reference path  with no collision between the robot and obstacles along the way \cite{arslan_arXiv2022}, i.e.,
\begin{align*}
\pos(t) &\in \freespace \quad \forall t \geq 0\\
\lim_{t\rightarrow \infty} \pathparam(t) &= \smax\\
\lim_{t\rightarrow \infty} \pos(t) &= \path(\smax).
\end{align*}

\subsection{Numerical Simulations}
\label{sec.NumericalSimulations}


In this part, we provide numerical simulations\footnote{For all simulations, we set the headway distance coefficient $\hgain \!=\! 0.5$, the control coefficient for the headway reference dynamics $\rgain \!=\! 1$, and the control coefficients for the time governor in \refeq{eq.SafePathFollowing} $\gain_\pathparam \!=\! 4$, $\gain_\safelevel \!=\! 4$. We use the arc-length parametrization of a given reference path $\path(\pathparam)$ such that the reference path length $L$ determines the path parameter range as $[\smin, \smax] = [0, L]$. All simulations are obtained by numerically solving the time-governed unicycle path-following dynamics in \refeq{eq.SafePathFollowing} using the \texttt{ode45} function of MATLAB.} to demonstrate safe unicycle path following based on adaptive headway control and associated circular and triangular feedback motion prediction methods in an office-like environment illustrated in \reffig{fig.SafeUnicycleNavigation}. 
As a baseline ground-truth motion prediction, we use the forward simulation of the adaptive headway motion control assuming the reference path point is kept fixed.
In \reffig{fig.SafeUnicycleNavigation} and \reffig{fig.velocity_profile}, we illustrate the resulting unicycle position trajectories and speed profiles during safe unicycle path following using circular, triangular, and forward-simulation-based motion predictions. 
The resulting unicycle motion significantly differs in terms of the unicycle speed and travel time, see \reffig{fig.velocity_profile}, depending on the accuracy of feedback motion prediction. 
As expected, forward simulation performs the best in terms of average speed and travel time with a significantly higher computational cost because the safety assessment requires the numerical calculation of the unicycle motion trajectory and the computation of the distance-to-collision at each trajectory point.
On the other hand, the triangular unicycle motion prediction shows a comparable performance like forward simulation with a lower computation cost because of the explicit analytical form of the motion prediction set in \refeq{eq.TriangularMotionPrediction} and its simple triangular shape. 
The circular unicycle motion prediction yields the slowest path following motion because it is more conservative and less accurate than the triangular unicycle motion prediction that strongly depends on unicycle position and orientation.
We observe that the fully symmetric circular motion prediction is more cautious about irrelevant sideways collisions with walls while moving along a wall.
Overall, accurate motion prediction is crucial for safe and fast robot motion generation around complex (e.g., dynamic) obstacles.

\begin{figure}[t]
\centering
\begin{tabular}{@{}c@{\hspace{0.005\columnwidth}}c@{\hspace{0.005\columnwidth}}c }
\includegraphics[width = 0.33\columnwidth]{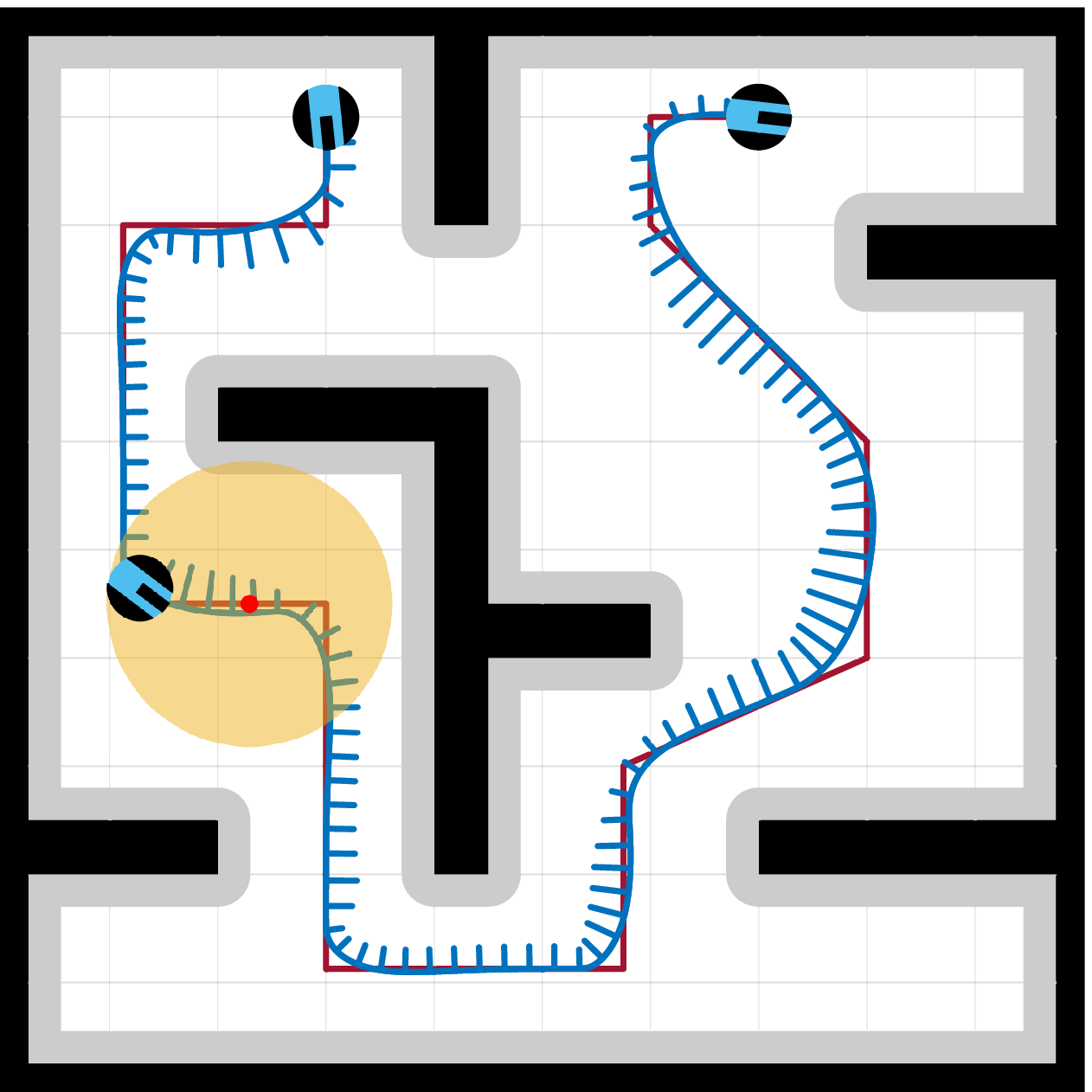} &  
\includegraphics[width = 0.33\columnwidth]{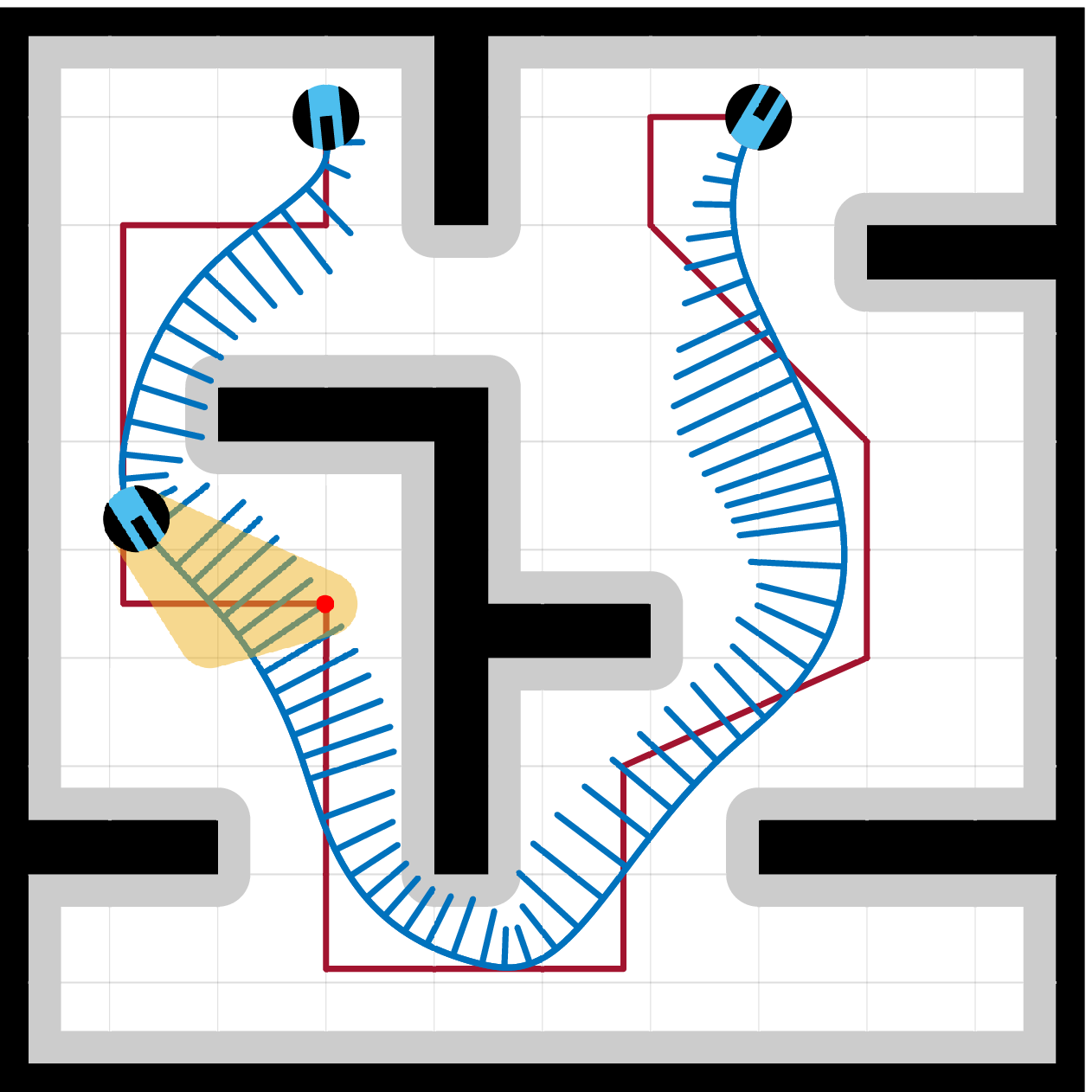} &  
\includegraphics[width = 0.33\columnwidth]{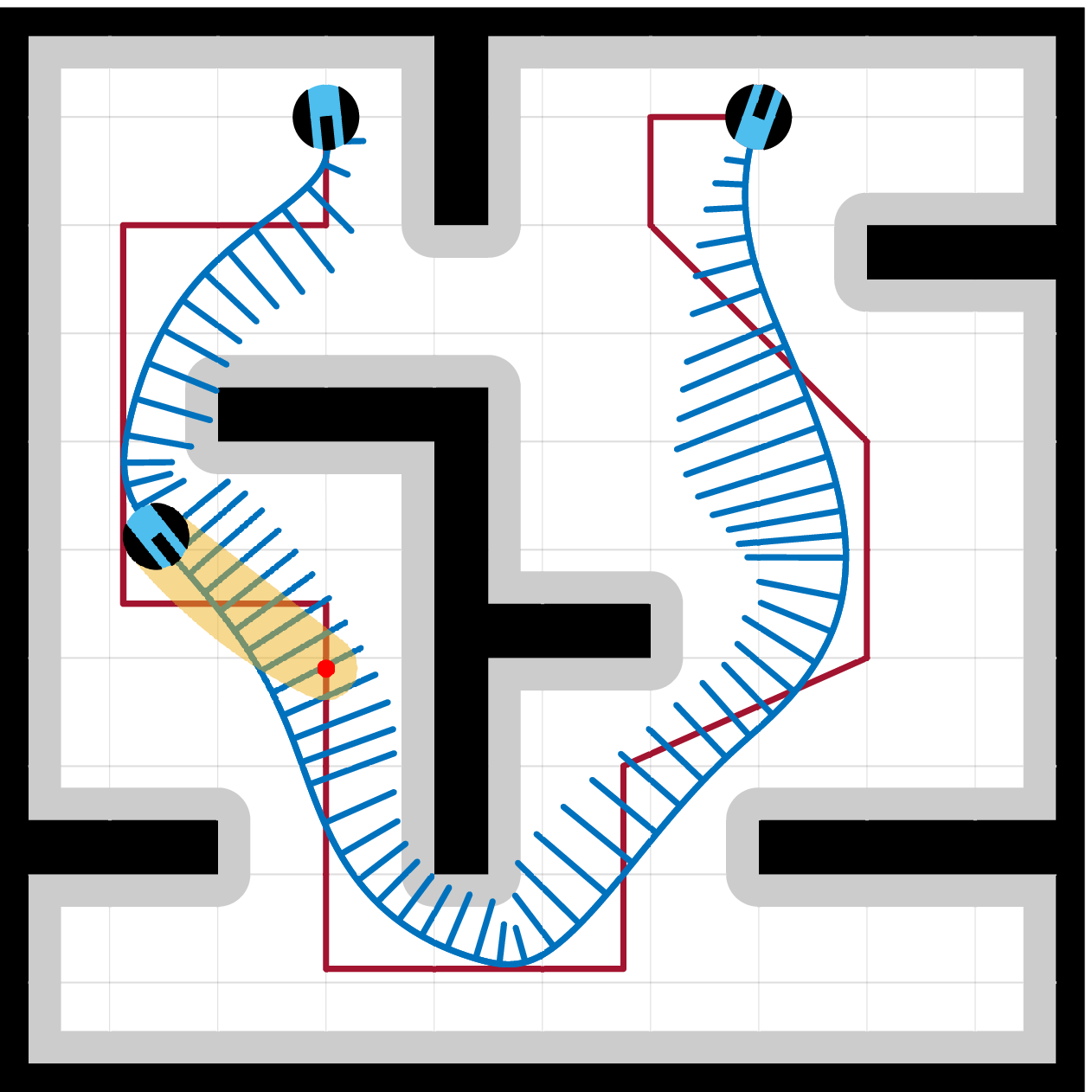} 
\\[-1mm]
\footnotesize{(a)} & \footnotesize{(b)} & \footnotesize{(c)}
\end{tabular}
\vspace{-3mm}
\caption{Time-governed safe unicycle path (red) following in an office-like cluttered environment via adaptive headway control and the presented feedback motion predictions.
The safety of the unicycle motion is constantly verified using (a) circular, (b) triangular, (c) forward-simulation-based motion predictions.
The unicycle robot motion is illustrated by blue lines, where blue bars indicate robot speed.
Yellow regions show an instance of the feedback motion prediction during the robot motion towards the moving reference path point (red point).}
\label{fig.SafeUnicycleNavigation}
\vspace{-3mm}
\end{figure}

\begin{figure}[t]
\centering
\begin{tabular}{@{\hspace{0.02\columnwidth}}c @{\hspace{0.02\columnwidth}} c @{\hspace{0.02\columnwidth}}}
\includegraphics[width = 0.465\columnwidth]{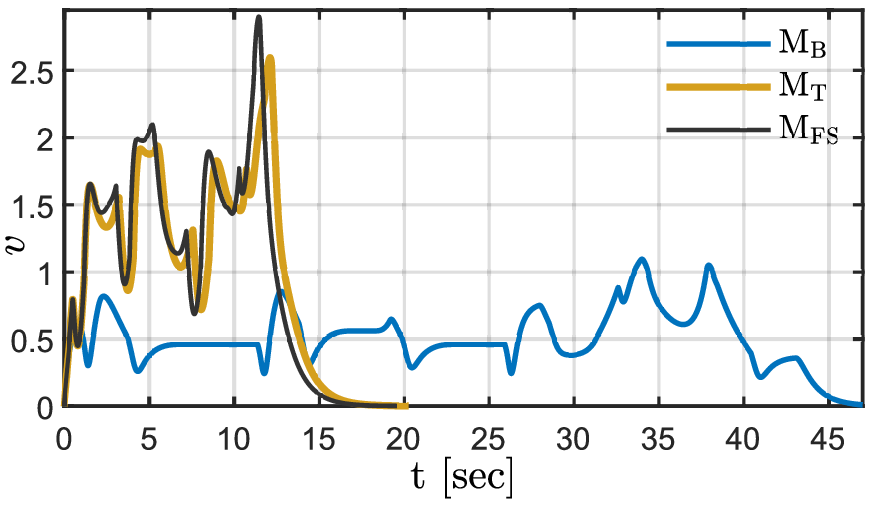} &
\includegraphics[width = 0.465\columnwidth]{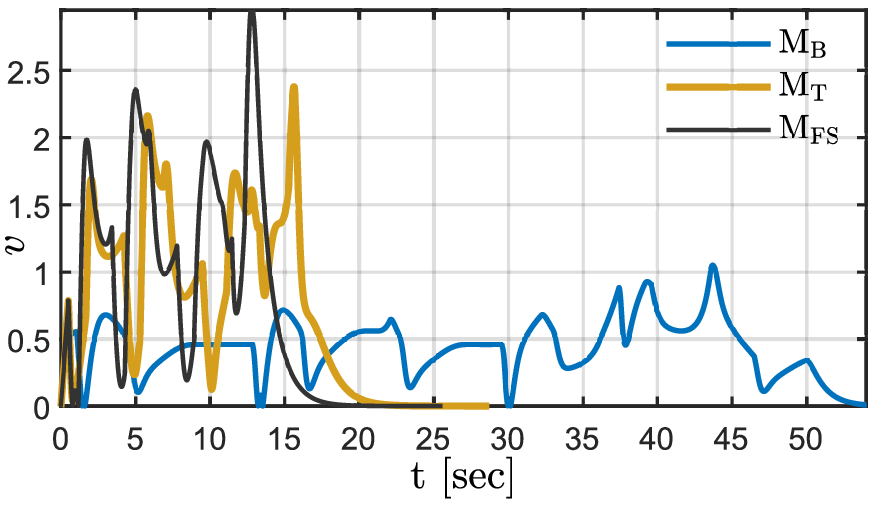} 
\end{tabular}
\caption{Unicycle speed profile during safe path following in an office-like cluttered environment for different
unicycle feedback motion prediction methods: circular $\motionset_\ball$, triangular $\motionset_\tri$, and forward-simulation-based $\motionset_{\mathrm{FS}}$ and different headway distance coefficients $\hgain \!=\! 0.5$ (left), $\hgain\!=\!0.75$ (right). A lower headway distance coefficient results in faster robot motion.}
\label{fig.velocity_profile}
\vspace{-\baselineskip}
\end{figure}

\section{Conclusions}
\label{sec.Conclusions}

In this paper, we design a new unicycle headway controller using an adaptive headway distance that allows the unicycle position to exactly converge a given goal position.
We construct new analytic circular and triangular feedback motion prediction sets that bound the closed-loop unicycle motion trajectory under the adaptive headway controller.
Using online path time parametrization, we present an application of the adaptive headway controller and its feedback motion prediction methods for safe path following of a unicycle robot around obstacles.
In our numerical simulations, we observe that the analytic triangular feedback motion prediction of the adaptive headway controller performs as well as the computationally expensive forward system simulation for capturing the closed-loop unicycle motion accurately and generating safe and fast unicycle motion.

Our current work focuses on  sensor-based safe unicycle motion design using feedback motion prediction in real hardware experiments, especially for safe robot navigation in unknown dynamic environments \cite{arslan_koditschek_IJRR2019}.
We also investigate the use of unicycle feedback motion prediction for multi-robot navigation and crowd simulation \cite{vandenberg_lin_manocha_ICRA2008}.

\bibliographystyle{IEEEtran}
\bibliography{references}

\begin{thebibliography}{10}
\providecommand{\url}[1]{#1}
\csname url@rmstyle\endcsname
\providecommand{\newblock}{\relax}
\providecommand{\bibinfo}[2]{#2}
\providecommand\BIBentrySTDinterwordspacing{\spaceskip=0pt\relax}
\providecommand\BIBentryALTinterwordstretchfactor{4}
\providecommand\BIBentryALTinterwordspacing{\spaceskip=\fontdimen2\font plus
\BIBentryALTinterwordstretchfactor\fontdimen3\font minus
  \fontdimen4\font\relax}
\providecommand\BIBforeignlanguage[2]{{%
\expandafter\ifx\csname l@#1\endcsname\relax
\typeout{** WARNING: IEEEtran.bst: No hyphenation pattern has been}%
\typeout{** loaded for the language `#1'. Using the pattern for}%
\typeout{** the default language instead.}%
\else
\language=\csname l@#1\endcsname
\fi
#2}}

\bibitem{kim_etal_RAM2009}
M.~Kim, S.~Kim, S.~Park, M.-T. Choi, M.~Kim, and H.~Gomaa, ``Service robot for
  the elderly,'' \emph{IEEE Robotics \& Automation Magazine}, vol.~16, no.~1,
  pp. 34--45, 2009.

\bibitem{renan_nascimento_RAS2021}
{\'{I}}.~R. da~Costa~Barros and T.~P. Nascimento, ``Robotic mobile fulfillment
  systems: A survey on recent developments and research opportunities,''
  \emph{Robotics and Autonomous Systems}, vol. 137, p. 103729, 2021.

\bibitem{pentzer_brennan_reichard_IROS2014}
J.~Pentzer, S.~Brennan, and K.~Reichard, ``The use of unicycle robot control
  strategies for skid-steer robots through the icr kinematic mapping,'' in
  \emph{2014 IEEE/RSJ International Conference on Intelligent Robots and
  Systems}, 2014, pp. 3201--3206.

\bibitem{gul_rahiman_alhady_sahal_CE2019}
F.~Gul, W.~Rahiman, and S.~S. Nazli~Alhady, ``A comprehensive study for robot
  navigation techniques,'' \emph{Cogent Engineering}, vol.~6, no.~1, 2019.

\bibitem{philippsen_siegwart_ICRA2003}
R.~Philippsen and R.~Siegwart, ``Smooth and efficient obstacle avoidance for a
  tour guide robot,'' in \emph{IEEE International Conference on Robotics and
  Automation}, vol.~1, 2003, pp. 446--451.

\bibitem{chakravarthy_debasish_TSM1998}
A.~Chakravarthy and D.~Ghose, ``Obstacle avoidance in a dynamic environment: a
  collision cone approach,'' \emph{IEEE Trans. Syst. Man Cybern. Part A},
  vol.~28, pp. 562--574, 1998.

\bibitem{arslan_koditschek_ICRA2017}
{\"O}.~Arslan and D.~E. Koditschek, ``Smooth extensions of feedback motion
  planners via reference governors,'' in \emph{IEEE Int. Conf. on Robotics and
  Automation}, 2017, pp. 4414--4421.

\bibitem{isleyen_arslan_RAL2022}
A.~{\.I}{\c{s}}leyen, N.~van~de Wouw, and {\"O}.~Arslan, ``From low to high
  order motion planners: Safe robot navigation using motion prediction and
  reference governor,'' \emph{IEEE Robotics and Automation Letters}, vol.~7,
  no.~4, pp. 9715--9722, 2022.

\bibitem{arslan_isleyen_arXiv2023}
{\"O}.~Arslan and A.~{\.I}{\c{s}}leyen, ``Vandermonde trajectory bounds for
  linear companion systems,'' \emph{arXiv:2302.10995}, 2023.

\bibitem{astolfi_JDSMC1999}
A.~Astolfi, ``{Exponential Stabilization of a Wheeled Mobile Robot Via
  Discontinuous Control},'' \emph{Journal of Dynamic Systems, Measurement, and
  Control}, vol. 121, no.~1, pp. 121--126, 1999.

\bibitem{astolfi_SCL1996}
------, ``Discontinuous control of nonholonomic systems,'' \emph{Systems \&
  Control Letters}, vol.~27, no.~1, pp. 37--45, 1996.

\bibitem{deluca_oriolo_vandittelli_IFAC2000}
A.~De~Luca, G.~Oriolo, and M.~Vendittelli, ``Stabilization of the unicycle via
  dynamic feedback linearization,'' \emph{IFAC Proceedings Volumes}, vol.~33,
  no.~27, pp. 687--692, 2000.

\bibitem{deluca_oriolo_vandittelli_RAMSETE2002}
------, ``Control of wheeled mobile robots: An experimental overview,''
  \emph{RAMSETE: articulated and mobile robotics for services and
  technologies}, pp. 181--226, 2002.

\bibitem{das_etal_TRA2002}
A.~Das, R.~Fierro, V.~Kumar, J.~Ostrowski, J.~Spletzer, and C.~Taylor, ``A
  vision-based formation control framework,'' \emph{IEEE Transactions on
  Robotics and Automation}, vol.~18, no.~5, pp. 813--825, 2002.

\bibitem{isleyen_vandewouw_arslan_arXiv2022}
A.~{\.I}{\c{s}}leyen, N.~van~de Wouw, and {\"O}.~Arslan, ``Feedback motion
  prediction for safe unicycle robot navigation ({T}echnical {R}eport),''
  \emph{arXiv:2209.12648}, 2022.

\bibitem{yun_yamamoto_1992Upenn}
X.~Yun and Y.~Yamamoto, ``On feedback linearization of mobile robots,'' Tech.
  Rep., 1992.

\bibitem{petrov_kralov_AMEE2019}
P.~Petrov and I.~Kralov, ``A look-ahead approach to mobile robot path tracking
  based on distance-only measurements,'' in \emph{Proceedings of the
  International Conference on Application of Mathematics in Engineering and
  Economoics '19}, 2019.

\bibitem{novel_campion_bastin_IJRR1995}
B.~d'Andr{\'e}a Novel, G.~Campion, and G.~Bastin, ``Control of nonholonomic
  wheeled mobile robots by state feedback linearization,'' \emph{The
  International journal of robotics research}, vol.~14, no.~6, pp. 543--559,
  1995.

\bibitem{ren_beard_book2007}
W.~Ren and R.~W. Beard, \emph{Distributed Consensus in Multi-Vehicle
  Cooperative Control: Theory and Applications}, 1st~ed.\hskip 1em plus 0.5em
  minus 0.4em\relax Springer, 2007, p. 199.

\bibitem{gamage_mann_gosine_ACC2010}
G.~W. Gamage, G.~K.~I. Mann, and R.~G. Gosine, ``Leader follower based
  formation control strategies for nonholonomic mobile robots: Design,
  implementation and experimental validation,'' in \emph{Proceedings of the
  2010 American Control Conference}, 2010, pp. 224--229.

\bibitem{lefevre_vasquez_laugier_ROBOMECH2014}
S.~Lef{\`e}vre, D.~Vasquez, and C.~Laugier, ``A survey on motion prediction and
  risk assessment for intelligent vehicles,'' \emph{ROBOMECH Journal}, vol.~1,
  no.~1, pp. 1--14, 2014.

\bibitem{arslan_arXiv2022}
{\"O}.~Arslan, ``Time governors for safe path-following control,'' \emph{arXiv
  preprint arXiv:2212.01444}, 2022.

\bibitem{althoff_dolan_TR02014}
M.~Althoff and J.~M. Dolan, ``Online verification of automated road vehicles
  using reachability analysis,'' \emph{IEEE Transactions on Robotics}, vol.~30,
  no.~4, pp. 903--918, 2014.

\bibitem{althoff_frehse_girard_ARCRAS2021}
M.~Althoff, G.~Frehse, and A.~Girard, ``Set propagation techniques for
  reachability analysis,'' \emph{Annual Review of Control, Robotics, and
  Autonomous Systems}, vol.~4, pp. 369--395, 2021.

\bibitem{mitchell_HSCC2007}
I.~M. Mitchell, ``Comparing forward and backward reachability as tools for
  safety analysis,'' in \emph{International Workshop on Hybrid Systems:
  Computation and Control}, 2007, pp. 428--443.

\bibitem{koung_etal_ICARCV2020}
D.~Koung, I.~Fantoni, O.~Kermorgant, and L.~Belouaer, ``Consensus-based
  formation control and obstacle avoidance for nonholonomic multi-robot
  system,'' in \emph{16th International Conference on Control, Automation,
  Robotics and Vision (ICARCV)}, 2020, pp. 92--97.

\bibitem{brockett_DGCT1983}
R.~W. Brockett, ``Asymptotic stability and feedback stabilization,'' in
  \emph{Differential Geometric Control Theory}, 1983, pp. 181--191.

\bibitem{liu_JCO1995}
J.~Liu, ``Sensitivity analysis in nonlinear programs and variational
  inequalities via continuous selections,'' \emph{SIAM Journal on Control and
  Optimization}, vol.~33, no.~4, pp. 1040--1060, 1995.

\bibitem{blanchini_Automatica1999}
F.~Blanchini, ``Set invariance in control,'' \emph{Automatica}, vol.~35,
  no.~11, pp. 1747 -- 1767, 1999.

\bibitem{arslan_koditschek_IJRR2019}
{\"O}.~Arslan and D.~E. Koditschek, ``Sensor-based reactive navigation in
  unknown convex sphere worlds,'' \emph{The International Journal of Robotics
  Research}, vol.~38, no. 2-3, pp. 196--223, 2019.

\bibitem{vandenberg_lin_manocha_ICRA2008}
J.~van~den Berg, M.~Lin, and D.~Manocha, ``Reciprocal velocity obstacles for
  real-time multi-agent navigation,'' in \emph{IEEE International Conference on
  Robotics and Automation}, 2008, pp. 1928--1935.

\end{thebibliography}

\appendices 

\section{Proofs}

\subsection{Proof of \reflem{lem.SimultaneousConvergence}}
\label{app.SimultaneousConvergence}

\begin{proof}
The sufficiency follows from \refeq{eq.HeadwayPoint} and \refeq{eq.HeadwayDist} as 
\begin{align*}
\pos = \goal \Longrightarrow \hdist = 0 \Longrightarrow \hpos = \pos = \goal.
\end{align*}
The necessity  of the statement can be observed using  the definition of the headway point $\hpos$ in \refeq{eq.HeadwayPoint} as
\begin{align*}
\hpos = \goal & \Rightarrow \hdist \ovect{\ort} \!= \goal \!- \pos \Rightarrow \hdist =  \norm{\goal\! - \pos}   \Rightarrow \pos = \goal
\end{align*}
where the last implication follows from   $\hdist = \hgain \norm{\pos - \goal} $ and $0 < \hgain< 1$.
\end{proof}

\subsection{Proof of \reflem{lem.RobotbtwProjectedandExtended}}
\label{app.RobotbtwProjectedandExtended}

\begin{proof}
If $\pos = \goal$, then all points are located at the goal (i.e., $\hproj=\hext=\goal$) and so the result holds.
Otherwise, to prove that $\pos \in \blist{\hproj, \hext}$, we show below for $\pos \neq \goal$ that
\begin{align*}
\tr{(\pos - \hproj)} \thead = 0,\,\, \tr{(\pos - \hproj)}\nhead  \geq 0, \,\, \tr{(\pos - \hext)}\nhead  \leq 0
\end{align*}
because the tangent $\thead$ of the headway-point motion defines the normal of the line segment between $\hproj$ and  $\hext$, and the normal $\nhead$ is directed from $\hproj$ to $\hext$ (i.e., $\nhead = \frac{\hext - \hproj}{\norm{\hext - \hproj}}$).

$\bullet$ Using \refeq{eq.HeadwayProjectedDefinition}, one can obtain the first condition as 
\begin{align*}
\tr{(\pos - \hproj)} \thead &= \tr{(\pos - \goal - \thead \tr{\thead}(\pos - \goal))} \thead 
\\
&= \tr{(\pos - \goal)} \thead - \tr{(\pos - \goal)} \thead = 0 
\end{align*} 

$\bullet$ The second condition follows from \refeq{eq.HeadwayPoint} and \refeq{eq.HeadwayTangentNormal} as
\begin{align*}
\tr{(\pos - \hproj)}\nhead & = \tr{(\pos - \hpos)} \nhead + \underbrace{\tr{(\hpos - \hproj)} \nhead}_{=0} = - \hdist \ovecTsmall{\ort} \nhead \\
& = \frac{\hdist}{\norm{\goal - \hpos}} \absval{\nvecTsmall{\ort}(\goal - \pos)} \geq 0
\end{align*} 

$\bullet$ Finally, one can verify the third condition  using \refeq{eq.HeadwayTangentNormal} and \reflem{lem.DistanceMetricsOrder} as
\begin{align*}
\tr{(\pos - \hext)}\nhead &=   \tr{(\pos - \hproj)} \nhead + \tr{(\hproj - \hext)} \nhead 
\\
&= \underbrace{- \hdist \ovecTsmall{\ort} \nhead}_{\leq \hdist}  - \underbrace{\frac{\hgain}{\sqrt{1 - \hgain^2}} \norm{\hproj - \goal}}_{\substack{\geq \hgain \norm{\goal -\pos} = \hdist \\ \text{by \reflem{lem.DistanceMetricsOrder}}}}
\leq 0
\end{align*} 
which completes the proof.
\end{proof}

\subsection{Proof of \reflem{lem.DistanceMetricsOrder}}
\label{app.DistanceMetricsOrder}

\begin{proof}
If $\pos = \goal$, then all points are located at the goal (i.e., $\hproj=\hext=\goal$) and so the result holds.
Otherwise, using the definition of the projected robot position in \refeq{eq.HeadwayProjectedDefinition} and the Cauchy-Schwartz inequality, one can obtain the lower bound on $\norm{\pos - \goal}$ for $\pos \neq \goal$ as 
\begin{align*}
\norm{\hproj-\goal} = \absval{\tr{\thead} (\pos - \goal)} \leq \norm{\thead}\norm{\pos - \goal} \leq \norm{\pos - \goal}
\end{align*}
where the last inequality is due to the fact  that $\norm{\thead} \leq 1$. 

By defining $\alpha := \frac{\tr{(\goal - \pos)}}{\norm{\goal- \pos}}\nvect{\ort}$, one can also verify the upper bound on $\norm{\pos - \goal}$ for $\pos \neq \goal$ as\footnote{The relevant terms for the upper bound on $\norm{\pos - \goal}$  for $\pos \neq \goal$ are explicitly given by 
\begin{align*}
&\tr{(\goal- \hpos)}(\goal - \pos) = \norm{\goal - \pos}^2\plist{1 - \hgain  \frac{\tr{(\goal - \pos)}}{\norm{\goal - \pos}}\nvect{\ort}} 
\\
&\norm{\goal - \hpos}^2 
= \norm{\goal - \pos}^2 \plist{1 + \hgain^2 - 2 \hgain  \frac{\tr{(\goal- \pos)}}{\norm{\goal-\pos}}\nvect{\ort}}.  
\end{align*}
}
\begin{align*}
\norm{\hproj-\goal}^2 &= \plist{\tr{\thead}(\goal - \pos)}^2 = \frac{(\tr{(\goal - \hpos)}(\goal - \pos))^2}{\norm{\goal - \hpos}^2}
\\
& = \norm{\pos - \goal}^2 \frac{(1-\hgain \alpha)^2}{1 - 2\alpha \hgain + \hgain^2}
\\
&=\norm{\pos - \goal}^2 \plist{1 - \hgain^2 \frac{1- \alpha^2 }{1 - \alpha^2 + (\hgain - \alpha)^2}}
\\
& \geq \norm{\pos - \goal}^2 \plist{1 - \hgain^2}
\end{align*}
where the inequality is due to $\alpha \in [-1, 1]$.
Hence, the result follows from \refeq{eq.ProjectedExtendedPosition} since  $\norm{\hext - \goal} = \frac{1}{\sqrt{1 - \hgain^2}} \norm{\hproj - \goal}$.
\end{proof}

\subsection{Proof of \reflem{lem.ProjectedExtendedDynamics}}
\label{app.ProjectedExtendedDynamics}

\begin{proof}
If $\pos = \goal$, the result holds because all unicycle positions are the goal ($\pos= \hproj=\hext = \goal$) and the unicycle doesn't move under the adaptive headway controller in \refeq{eq.AdaptiveHeadwayControl}.
 
Otherwise, for $\pos \neq \goal$ the tangent vector $\thead$ of the motion of the headway point $\hpos$ is constant (due to the headway reference dynamics in \refeq{eq.ReferenceDynamics}) and it satisfies  $\thead = \frac{\goal - \hproj}{\norm{\goal - \hproj}}$ (since $\norm{\hproj - \goal} =  \tr{\thead} (\goal - \pos) = \tr{\thead} (\goal - \hproj)$).
Therefore,  we have for $\pos \neq \goal$ that
\begin{align}
\dot{\hproj} &= \thead \tr{\thead} \dot{\pos} = \thead \tr{\thead} \linvel_{\goal}(\pos,\ort) \ovect{\ort}
\\
&= - \kappa(\hproj - \goal)
\end{align}
where 
\begin{align}
\kappa = \linvel_{\goal}(\pos,\ort) \ovecTsmall{\ort} \frac{\goal - \hproj}{\norm{\hproj - \goal}^2}. 
\end{align}
Moreover, one can observe the nonnegativity of $\kappa$  by rewriting it using the definitions of linear velocity in \refeq{eq.AdaptiveHeadwayControlVelocity} and the projected robot position in \refeq{eq.HeadwayProjectedDefinition} as\footnote{Here, we substitute  $\goal - \hproj = \thead \tr{\thead}(\goal - \pos)$ and 
\begin{align*}
\linvel_{\goal}(\pos,\ort)  &= \frac{\rgain \ovecTsmall{\ort} (\goal - \hpos)}{ 1- \hgain \ovecTsmall{\ort} \xvectsmall{\pos} } = \frac{\rgain \norm{\hpos - \goal} \ovecTsmall{\ort} \thead}{ 1- \hgain \ovecTsmall{\ort} \xvectsmall{\pos} }.
\end{align*}
}
\begin{align*}
\kappa &= \frac{\rgain \frac{\norm{\hpos - \goal}}{\norm{\hproj - \goal}} \plist{\ovecTsmall{\ort} \thead}^2 \tr{\thead} (\goal - \pos)}{ 1- \hgain \ovecTsmall{\ort} \xvectsmall{\pos} } 
\\
 &= \frac{\norm{\pos - \goal}^2}{\norm{\hproj - \goal}} \plist{\ovecTsmall{\ort} \thead}^2
\end{align*}
because 
\begin{align*}
\tr{\thead}(\goal - \pos) &=  \frac{1}{\norm{\hpos - \goal}} \tr{(\goal - \hpos)} (\goal - \pos) 
\\
&=  \frac{\norm{\goal - \pos}^2}{\norm{\hpos - \goal}}  \plist{1 - \hgain \ovecTsmall{\ort} \frac{\goal - \pos}{\norm{\goal - \pos}}}.
\end{align*}

Similarly, since both $\thead$ and $\nhead$ are constant under  the adaptive headway controller,  the time rate of change of  the extended unicycle position can be obtained for $\pos \neq \goal$ as 
\begin{align*}
\dot{\hext} &= \dot{\hproj} - \frac{\hgain}{\sqrt{1 - \hgain^2}} \nhead \tr{\thead} \dot{\pos} \\
& = - \kappa (\hproj - \goal) - \frac{\hgain}{\sqrt{1 - \hgain^2}} \nhead \tr{\thead} \linvel_{\goal}(\pos,\ort)  \ovectsmall{\ort}
\\
& =  - \kappa (\hproj - \goal) - \underbrace{\linvel_{\goal}(\pos,\ort)
\frac{\tr{\thead}\!\ovectsmall{\ort}}{\norm{\hproj - \goal}}}_{\kappa} \underbrace{\frac{\hgain}{\sqrt{1 - \hgain^2}} \norm{\hproj - \goal}\nhead}_{= \hext - \hproj}
\\
& = - \kappa (\hext - \goal).
\end{align*}
Hence, the concluding remarks about the decaying distances of the projected and extended unicycle positions and their motion range simply follow from their first-order converging dynamics to the goal position.
\end{proof}

\subsection{Proof of \reflem{lem.GoalAlignment}}
\label{app.GoalAlignment}

\begin{proof}
Under the  adaptive headway controller  in \refeq{eq.AdaptiveHeadwayControl},  the time rate of change of the alignment of the unicycle robot with the goal satisfies
\begin{align*}
    &\!\!\frac{\diff}{\diff t}\!\plist{\! \ovecTsmall{\ort}\! \xvectsmall{\pos} \!} \!=\! \plist{\!\nvecTsmall{\ort}\! \xvectsmall{\pos}\!}^{\!\!2} \plist{\!\frac{\focgain}{\hgain } \!-\! \tfrac{\linvel_{\goal}(\pos, \ort)}{\norm{\goal - \pos}}\!}\\
    &=\!  \plist{\!\nvecTsmall{\ort}\! \xvectsmall{\pos}\!}^{\!\!2} \plist{\!\! \frac{\focgain}{\hgain} \!-\! \frac{ \focgain   \plist{ \ovecTsmall{\ort} \xvectsmall{\pos} \!-\! \hgain \!\!}  }{1 \!-\! \hgain \ovecTsmall{\ort} \xvectsmall{\pos} } \!\!}\\
    &=\! \frac{\rgain}{\hgain}\plist{\!\nvecTsmall{\ort}\! \xvectsmall{\pos}\!}^{\!\!2} \,\, \frac{\!\plist{1 \!-\! 2\hgain\ovecTsmall{\ort} \xvectsmall{\pos} \!+\! \hgain^2 } }{ \plist{1 \!-\! \hgain \ovecTsmall{\ort} \xvectsmall{\pos} }}
\\
&\geq \frac{\rgain}{\hgain}\plist{\!\nvecTsmall{\ort}\! \xvectsmall{\pos}\!}^{\!\!2} \plist{1 \!-\! \hgain \ovecTsmall{\ort} \xvectsmall{\pos} } 
\end{align*}
where the inequality is due to the fact that $0 < \hgain < 1$ and
\begin{align*}
1 \!-\! 2\hgain\ovecTsmall{\ort} \!\!\xvectsmall{\pos} \!+\! \hgain^2 \!>\! \plist{\! 1 \!-\! \hgain\ovecTsmall{\ort} \!\!\xvectsmall{\pos} \!}^{\!2} > 0
\end{align*}
which completes the proof.
\end{proof}

\subsection{Proof of \reflem{lem.EuclideanDistance2Goal}}
\label{app.EuclideanDistance2Goal}

\begin{proof}
Since the adaptive headway controller constantly turns the robot towards the goal, it follows from \reflem{lem.GoalAlignment} that $\ovecTsmall{\ort(t)}\!\! \xvectsmall{\pos(t)} \!>\! \hgain$ for all $t \geq 0$  and $\pos(t) \neq \goal$.
Hence, by construction in \refeq{eq.AdaptiveHeadwayControlVelocity}, having $\ovecTsmall{\ort(t)}\!\! \xvectsmall{\pos(t)} \!>\! \hgain$ ensures  $\linvel_{\goal}(\pos(t), \ort(t)) > 0$ for $\pos(t) \neq \goal$.
Therefore, using these facts, one can verify that the Euclidean distance of the unicycle position to the goal strictly decreases along the unicycle motion trajectory for $\pos(t) \neq \goal $ as 
\begin{align*}
\frac{\diff}{\diff t} \norm{\pos(t)\! -\! \goal}^2 = - 2 \underbrace{\linvel_{\goal}(\pos(t), \ort(t)\!)}_{> 0} \underbrace{\ovecTsmall{\ort(t)}\!\! (\goal\!\!-\!\pos(t))}_{>\hgain } < 0
\end{align*}    
which completes the proof.
\end{proof}

\end{document}